\newcommand{\R}{\mathbb{R}}
\newcommand{\abs}[1]{\left\vert#1\right\vert}
\newcommand{\norm}[1]{\left\Arrowvert#1\right\Arrowvert}
\newcommand{\argmin}{\mathop{\text{argmin}}}
\newcommand{\argmax}{\mathop{\text{argmax}}}
\newcommand{\grad}{\nabla}
\newcommand{\defeq}{\overset{\operatorname{def}}{=}}
\newtheorem{theorem}{Theorem}[section]
\newtheorem{prop}[theorem]{Proposition}
\icmltitlerunning{Amortized Conditional Normalized Maximum Likelihood}
\begin{document}

\twocolumn[
\icmltitle{Amortized Conditional Normalized Maximum Likelihood: Reliable Out of Distribution Uncertainty Estimation 
}



\icmlsetsymbol{equal}{*}

\begin{icmlauthorlist}
\icmlauthor{Aurick Zhou}{ber}
\icmlauthor{Sergey Levine}{ber}
\end{icmlauthorlist}

\icmlaffiliation{ber}{EECS, University of California, Berkeley, USA}

\icmlcorrespondingauthor{Aurick Zhou}{aurick@berkeley.edu}

\icmlkeywords{Machine Learning, ICML}

\vskip 0.3in
]






\begin{abstract}
While deep neural networks provide good performance for a range of challenging tasks, calibration and uncertainty estimation remain major challenges, especially under distribution shift. 
In this paper, we propose the amortized conditional normalized maximum likelihood (ACNML)
method as a scalable general-purpose approach for uncertainty estimation, calibration, and out-of-distribution robustness with deep networks. 
Our algorithm builds on the conditional normalized maximum likelihood (CNML) coding scheme, which has minimax optimal properties according to the minimum description length principle, but is computationally intractable to evaluate exactly for all but the simplest of model classes. 
We propose to use approximate Bayesian inference technqiues to produce a tractable approximation to the CNML distribution. Our approach can be combined with any approximate inference algorithm that provides tractable posterior densities over model parameters.
We demonstrate that ACNML compares favorably to a number of prior techniques for uncertainty estimation in terms of calibration on out-of-distribution inputs.
\end{abstract}

\section{Introduction}

Current machine learning methods provide unprecedented accuracy across a range of domains, from computer vision to natural language processing. However, in many high-stakes applications, such as medical diagnosis or autonomous driving, rare mistakes can be extremely costly. Thus, effective deployment of learned models requires not only high accuracy, but also a way to measure the certainty in a model's predictions in order to assess risk and allow the model to abstain from making decisions when there is low confidence in the prediction. While deep networks offer excellent prediction accuracy, they generally do not provide the means to accurately quantify their uncertainty. 
This is especially true on out-of-distribution inputs, where deep networks tend to make overconfident incorrect predictions \citep{ovadia2019trust}. 
In this paper, we tackle the problem of obtaining reliable uncertainty estimates under distribution shift, with the aim of producing models that can reliably report their uncertainty even when presented with unexpected inputs.


Most prior work approaches the problem of uncertainty estimation from the standpoint of Bayesian inference. By treating parameters as random variables with some prior distribution, Bayesian inference can compute posterior distributions that capture a notion of \textit{epistemic} uncertainty and allow us to quantitatively reason about uncertainty in model predictions. However, computing accurate posterior distributions becomes intractable as we use very complex models like deep neural nets, and current approaches require highly approximate inference methods that fall short of the promise of full Bayesian modeling in practice.

Bayesian methods also have a deep connection with the minimum description length (MDL) principle, a formalization of Occam's razor that casts learning as performing efficient data compression and has been widely used as a motivation for model selection techniques. 
Codes corresponding to maximum-a-posteriori estimators and Bayes marginalization have been commonly used within the MDL framework. 
However, other coding schemes have been proposed in MDL centered around achieving different notions of minimax optimality.
Interpreting coding schemes as predictive distributions, such methods can directly inspire prediction strategies that give conservative predictions and do not suffer from excessive overconfidence due to their minimax formulation. 

One such predictive distribution is the \textit{conditional normalized maximum likelihood} (CNML) \citep{Grunwald2007TheLearning, Rissanen2007ConditionalModels, Roos2008BayesianModels} model, also known as sequential NML or predictive NML \citep{Fogel2018UniversalLog-Loss}. 
To make a prediction on a new input, CNML considers every possible label and finds the model that best explains that label for the query point together with the training set. It then uses that corresponding model to assign probabilities for each input and normalizes to obtain a valid probability distribution. 
We will argue that the CNML prediction strategy can be useful for providing reliable uncertainty estimates on out-of-distribution inputs.
Intuitively, instead of relying on a learned model to extrapolate from the training set to the new (potentially out-of-distribution) input, CNML can obtain more reasonable predictive distributions by explicitly updating a model for each potential label of the particular test input and then asking ``given the training data, which labels would make sense for this input?'' 

While CNML provides compelling minimax regret guarantees,
practical instantiations have been exceptionally difficult, because computing predictions for a test point requires retraining the model on the test point \emph{concatenated with the entire training set}. With large models like deep neural networks, this can require hours of training for every prediction, rendering naive CNML schemes infeasible for practical use.

In this paper, we argue that prediction strategies inspired by CNML, which output conservative predictions that depend on models explicitly trained on the test input, can provide reasonable uncertainty estimates even when faced with out-of-distribution data.
To instantiate such a strategy tractably, we propose \textit{amortized CNML} (ACNML), a practical algorithm for approximating CNML utilizing approximate Bayesian inference. ACNML avoids the need to optimize over large datasets during inference by using an approximate posterior in place of the training set. 
We show that our proposed approach is compares favorably to number of prior techniques for uncertainty estimation on out-of-distribution inputs, and is substantially more feasible and computationally efficient than prior techniques for using CNML predictions with deep neural networks.



\section{Conditional Normalized Maximum Likelihood}

ACNML is motivated from the minimum description length (MDL) principle, which states that any regularities in a dataset can be exploited to compress it, and so learning is reformulated as encoding the data as efficiently as possible.
\citep{Rissanen1989StochasticTheory,  Grunwald2007TheLearning}. 
While MDL is typically described in terms of code lengths, we can associate codes with probability distributions, with the code length of an object corresponding to the negative log-likelihood under that probability distribution.
MDL was originally formulated in a generative setting where the goal is to code arbitrary data, we focus here on a supervised learning setting, where we assume the inputs are already known and our goal is to only encode/predict the labels.

\textbf{Normalized Maximum Likelihood.} 
Suppose we have a model class $\Theta$, where each $\theta \in \Theta$ corresponds to a conditional distribution $p_{\theta}(y\vert x)$. Let $\hat \theta(y_{1:n} \vert x_{1:n})$ denote the maximum likelihood estimator for a sequence of labels $y_{1:n}$ corresponding to inputs $x_{1:n}$ over all $\theta \in \Theta$.
Given a sequence of inputs $x_{1:n}$ and labels $y_{1:n}$, we can define a regret for a distribution over labels $q$ as 
\begin{align}
    R(q, y_{1:n}, x_{1:n}, \Theta) \! \defeq \! \log p_{\hat{\theta}(y_{1:n}\vert x_{1:n})}(y_{1:n}\vert x_{1:n}) \nonumber \\- \log q(y_{1:n}).
\end{align}
In relation to the MDL principle, this regret corresponds to the excess number of bits $q$ uses to encode the labels $y_{1:n}$ compared to the best distribution in the model class $\Theta$.
For any fixed input sequence, we can then define the \textit{normalized maximum likelihood distribution} (NML) as 
\begin{equation}
    p^{\text{NML}}(y_{1:n}\vert x_{1:n}) = \frac{p_{\hat \theta(y_{1:n} \vert x_{1:n})}(y_{1:n}\vert x_{1:n})}{\sum_{\tilde y_{1:n} \in \mathcal{Y}^n} p_{\hat \theta(\tilde{y}_{1:n}\vert x_{1:n})} (\tilde y_{1:n}\vert x_{1:n})}.
\end{equation}
The NML distribution can be shown to achieve minimax regret \citep{Shtarkov1987UniversalMessages, Rissanen1996FisherComplexity} as it achieves the same regret for all label sequences.
\begin{equation} 
    p^{\text{NML}} =\argmin_{q} \max_{y_{1:n} \in \mathcal Y^n} R(q, y_{1:n}, x_{1:n}, \Theta).
\end{equation}
This corresponds, in a sense, to an optimal coding scheme for sequences of labels of known fixed length $n$.

\textbf{Conditional NML.} Instead of making predictions across entire sequences of labels at once, NML can be adapted to the setting where we make predictions about only the next label
based on the previously seen data, resulting in \textit{conditional NML} (CNML) \citep{Rissanen2007ConditionalModels, Grunwald2007TheLearning, Fogel2018UniversalData}.
While several variations on CNML exist, we consider the following:
\begin{equation}\label{supervised-cnml}
p^{\text{CNML}}(y_n \vert x_n; x_{1:n-1}, y_{1:n-1}) \propto p_{\hat \theta(y_{1:n} \vert x_{1:n})}(y_n\vert x_n),
\end{equation}
which solves the minimax problem
\begin{equation} \label{eq:cnml-minimax-supervised}
    p^{\text{CNML}}= \argmin_q \max_{y_n} \log p_{\hat\theta(y_{1:n}\vert x_{1:n})}(y_n\vert x_n) - \log q(y_n).
\end{equation}
We note that the inner maximization is only over the next label $y_n$ that we are predicting, rather than the full sequence as before.
This prediction strategy is now amenable to our typical supervised learning setting, where $(x_{1:n-1}, y_{1:n-1})$ is our training set, and we want to output a predictive distribution over labels $y_n$ for a new test input $x_n$.

\textbf{CNML provides conservative predictions.} 
Here we motivate why CNML can provide reasonable uncertainty estimates for out-of-distribution inputs. For each query point, CNML considers each potential label and finds the model that would be most consistent with that label and with the training set. 
If that model assigns high probability to the label, then minimizing the worst-case regret forces CNML to assign relatively high probability to it.
Compared to simply letting a model trained only on the training set extrapolate to OOD inputs, we expect CNML to give more conservative predictions on OOD inputs, since it explicitly considers what would have happened if the new data point had been labeled with each possible label. 

\begin{wrapfigure}{r}{0.45\columnwidth}
\vspace{-10pt}
\includegraphics[width=1.0\linewidth]{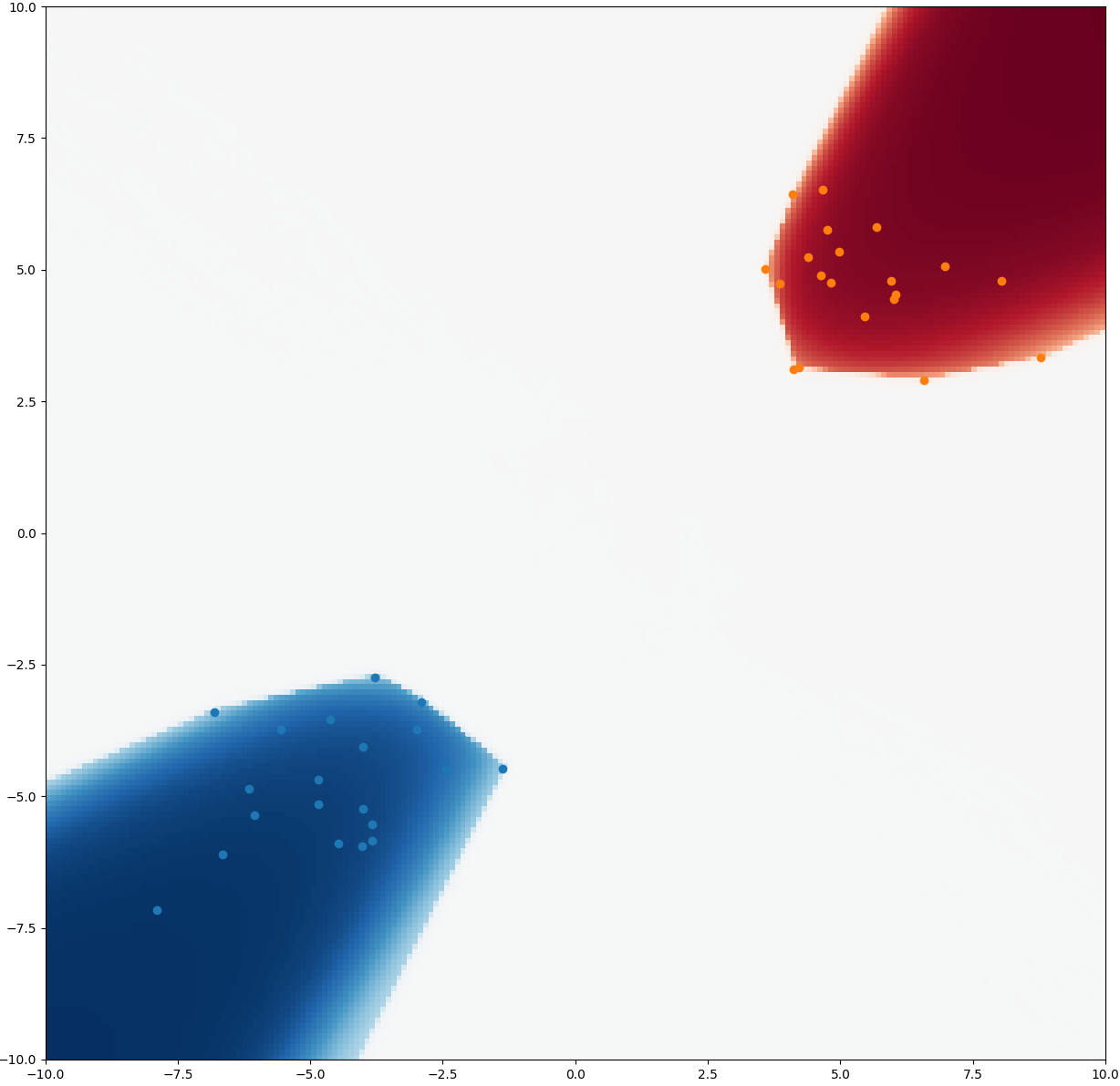}
\vspace{-20pt}

\caption{\label{fig:cnml-heatmap} CNML probabilities with a logistic regression model. CNML expresses high uncertainty and provides uniform predictions (indicated by the white color) on most of the input space away from the training set (shown in blue and orange dots).}
\vspace{-15pt}
\end{wrapfigure}

We use a 2D logistic regression example to illustrate CNML's conservative predictions, showing a heatmap of CNML probabilities in Figure~\ref{fig:cnml-heatmap}. CNML provides uniform predictions on most of the input space away from the training samples. In Figure~\ref{fig:cnml-basic}, we illustrate how CNML arrives at these predictions, showing the predictions for the parameters $\hat{\theta}_0$ and $\hat{\theta}_1$, corresponding to labeling the test point (shown in pink in Figure~\ref{fig:cnml-basic}, left) with either label 0 or 1.

However, CNML may be too conservative when the model class $\Theta$ is very expressive.
Na\"{i}vely applying CNML with large model classes can result in the per-label models fitting their labels for the query point arbitrarily well, such that CNML gives unhelpful uniform predictions even on inputs we would hope to reasonably extrapolate on. We see this in the 2D logistic regression example in Figure \ref{fig:cnml-heatmap}.
Thus, the model class $\Theta$ would need to be restricted in some form, for example by only considering parameters within a certain distance from the training set solution as a hard constraint.

\begin{figure}[t]
\centering
\includegraphics[width=\linewidth]{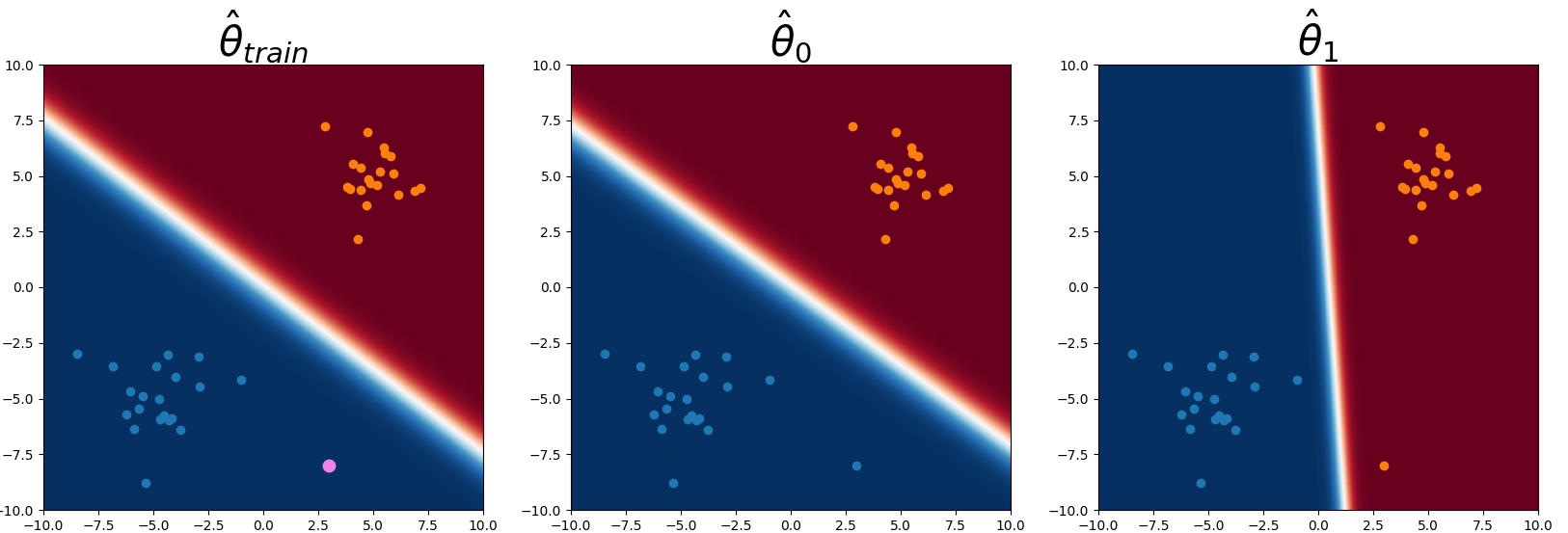}
\vspace{-20pt}
\caption{\label{fig:cnml-basic}
Given the labeled training set (blue and orange dots), we want to predict the label at the query input (shown in pink in the left image), which the training set MLE $\hat\theta_{\text{train}}$ confidently classifies as the blue class. However, CNML assigns a near-uniform prediction on the query point, as it computes new MLEs $\hat \theta_0$ and $\hat \theta_1$ (center and right images) by assigning different labels to the query point, and finds both labels are consistent with the training data. 
}
\vspace{-20pt}
\end{figure}

Another approach for controlling the expressivity of the model class is to generalize CNML to use \textit{regularized} estimators instead of maximum likelihood, resulting in normalized maximum a posteriori (NMAP) \citep{Kakade2006Worst-caseModels} codes.
Instead of using maximum likelihood parameters, NMAP selects $\hat \theta$s
to be the parameter that maximizes both data likelihood and a regularization term, or prior, over parameters, and we can define slightly altered notions of regret using these MAP estimators in all the previous equations to get a \textit{conditional normalized maximum a posteriori} distribution instead. 
See Appendix \ref{appendix:nmap} for completeness.

Going back to the logistic regression example, we plot heatmaps of CNMAP predictions in Figure \ref{fig:cnmap-heatmaps}, adding different amounts of L2 regularization to the logistic regression weights. As we add more regularization, the model class becomes effectively less expressive, and the CNMAP predictions become less conservative.

\begin{figure}[h]
    \centering
    \begin{subfigure}[t]{0.3\linewidth}
            \centering
            \includegraphics[width=\linewidth]{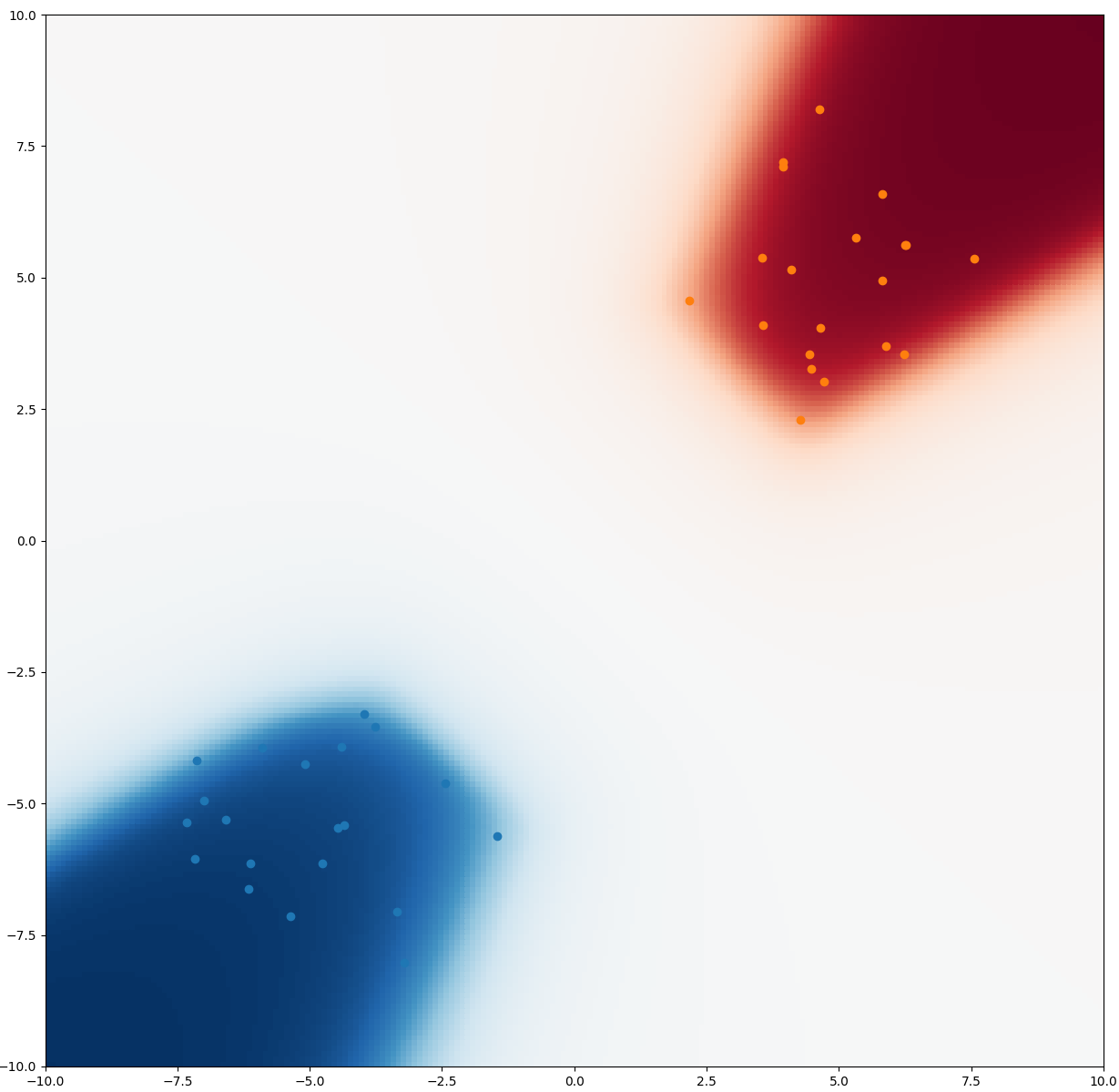}
            \caption{$\lambda=0.1$}
            \label{fig:cnml-illustration:mle}
        \end{subfigure}%
    \begin{subfigure}[t]{0.3\linewidth}
            \centering
            \includegraphics[width=\linewidth]{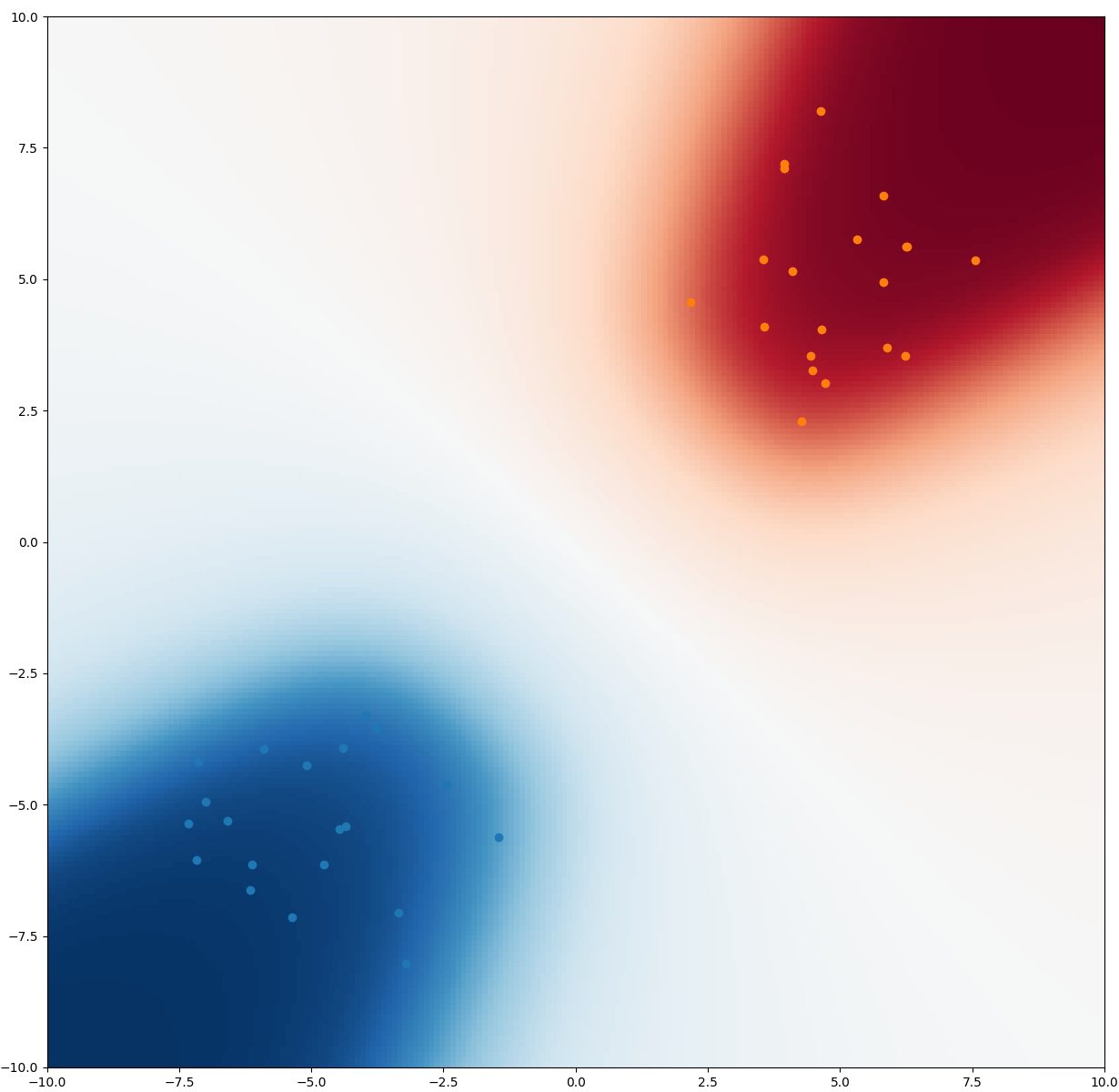}
            \caption{$\lambda=1$}
            \label{fig:cnml-illustration:cnml}
        \end{subfigure}%
    \begin{subfigure}[t]{0.3\linewidth}
            \centering
            \includegraphics[width=\linewidth]{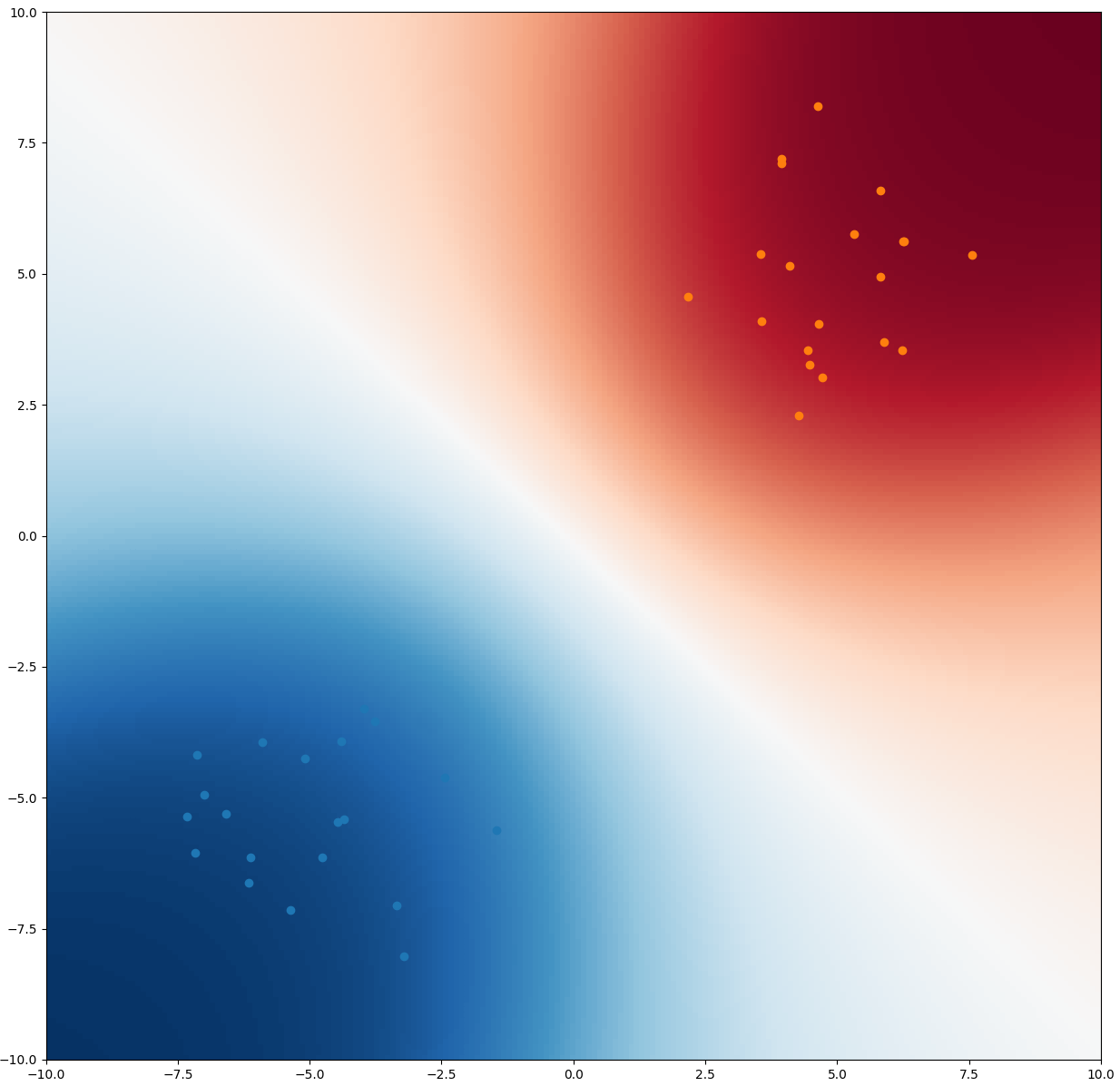}
            \caption{$\lambda=10$}
            \label{fig:cnml-illustration:nmap}
        \end{subfigure}%
    \caption{CNMAP probabilities with different levels of L2 regularization $\lambda \norm{w}_2^2$. Predictions are less conservative as $\lambda$ increases.}
    \label{fig:cnmap-heatmaps}
    \vspace{-10pt}
\end{figure}

\textbf{Computational costs of CNML.}
While we have argued that CNML can provide an appealing approach for uncertainty estimation for out-of-distribution inputs, it can be exceptionally impractical to instantiate, particularly with large models like neural networks, due to the prohibitive computational costs of computing the maximum likelihood estimators for each new input and label.
To evaluate the distribution on a new test point, one must solve a nonconvex optimization problem for each possible label, with each problem involving the entire training dataset along with the new test point.
This direct evaluation of CNML therefore becomes computationally infeasible with large datasets and high-capacity models, and further requires that the model carry around the entire training set even when it is deployed.
In settings where critical decisions must be made in real time, even running a single epoch of additional training would be infeasible. For this reason, NML-based methods have not gained much traction as a practical tool for improving the predictive performance of high-capacity models.

\section{Amortized CNML}
In this section, we derive our method, amortized conditional normalized maximum likelihood (ACNML), which provides a tractable approximation for CNML and CNMAP via approximate Bayesian inference.
Instead of directly computing maximum likelihood parameters over the query point and training set, our method uses an approximate posterior distribution over parameters to capture the necessary information about the training set, reducing the maximization to only the single new point. The computational cost at test-time therefore does not increase with training set size.



\subsection{Algorithm Derivation}
\textbf{Incorporating an exact posterior into CNML.}
Given a prior distribution $p(\theta)$, the Bayesian posterior likelihood conditioned on the training data is given by 
\begin{equation}
    p(\theta \vert x_{1:n-1}, y_{1:n-1}) \propto p(\theta)p_\theta(y_{1:n-1}\vert x_{1:n-1}).
\end{equation}
We can write the MAP estimators in the CNMAP distribution for a fixed query input $x_n$ as
\begin{align}
    \hat \theta_y = \argmax_{\theta \in \Theta} & \underbrace{\log p_{\theta}(y_{1:n-1}\vert x_{1:n-1}) + \log p(\theta)}_{\log p(\theta \vert x_{1:n-1}, y_{1:n-1})} \nonumber \\ & +\log p_{\theta}(y\vert x_n)\label{eq:cnml-mle}
\end{align}
We can thus replace the training data log-likelihood $p_{\theta}(y_{1:n-1}\vert x_{1:n-1})$ with the Bayesian posterior density $\log p(\theta\vert x_{1:n-1}, y_{1:n-1})$ when computing $\hat\theta_y$. 
We can also recover CNML as a special case of CNMAP by using a uniform prior, but as discussed previously, CNML with highly expressive model classes can lead to overly conservative predictions, so we will opt to use non-uniform priors that help control model complexity instead.
For example, we may use a zero-mean Gaussian prior $p(\theta)$ over our weights, corresponding to L2 regularization. 

\begin{algorithm}[t]
\caption{Amortized CNML (ACNML)}
\label{alg:acnml}
\begin{algorithmic}
\STATE \textbf{Input}: Model class $\Theta$, Training Data $(x_{1:n-1}, y_{1:n-1})$, Test Point: $x_n$, Classes (1, \ldots, k) 
\STATE \textbf{Output}: Predictive distribution $p(y \vert x_n)$ 
\STATE \textbf{Training}: Run approximate inference algorithm on training data $(x_{1:n-1}, y_{1:n-1})$ to get posterior density $q(\theta)$ 
\FORALL{possible labels $i \in (1, \ldots, k)$}
\STATE    Compute $\hat \theta_i = \argmax_{\theta} \log p_{\theta}(i \vert x_n) + \log q(\theta)$ \\
\ENDFOR
\STATE Return $p(y\vert x_n) = \frac{p_{\hat \theta_y}(y\vert x_n)}{\sum_{i=1}^k p_{\hat \theta_i}(i\vert x_n)}$ 
\end{algorithmic}
\end{algorithm}
\textbf{ACNML with an approximate posterior.}
Of course, the exact Bayesian likelihood is no easier to compute than the original training log likelihood. However, we can derive a tractable approximation by replacing the exact posterior $p(\theta\vert x_{1:n-1}, y_{1:n-1})$ with an approximate posterior $q(\theta)$ instead.
We can obtain an approximate posteriors via standard approximate Bayesian techniques such as variational inference or Laplace approximations.
We focus on Gaussian posterior approximations for computational efficiency, and discuss in Section~\ref{sec:theory} why this class of distributions provides a reasonable approximation for large datasets.

For practical purposes, we expect the approximate posterior log-likelihood to ensure the optimal $\hat \theta_y$ selected for each  label retains good performance on the training set.
By replacing the likelihood over the training data with the probability under an approximate posterior,
it becomes unnecessary to retain the training data at test time, only the parameters of the approximate distribution.
Optimization also becomes much simpler, as it no longer requires stochastic gradients, and the Gaussian posterior log density $\log q(\theta)$ serves as a strongly convex regularizer.

\textbf{ACNML algorithm summary}
A summary of the ACNML algorithm is presented in Algorithm~\ref{alg:acnml}. The training process for obtaining $q(\theta)$ only needs to be performed once on the training set, whereas the inference step is performed for each test point. However, this inference step only requires optimizing the model on a single data point with a regularizer provided by $\log q(\theta)$.

\subsection{Analysis of ACNML with Gaussian Posteriors} 
\label{sec:theory}
In this section, we argue that using a Gaussian approximate posterior in ACNML, which correspond to second-order approximations to the training set log-likelihood, suffices for accurately computing the CNML distributions when the training set is large.
The intuition is that for large training sets, the combined likelihoods of all the training points dominate over the single new test point, so the perturbed MLEs $\hat \theta_y$ remains close to the original training set MLE $\hat \theta$, letting us rely on local approximations to the training loss.  

Under simplifying assumptions of convexity and smoothness of the training losses, we can formalize this using the concept of \textit{influence functions}, which measure how the MLE (and more general $M$-estimators) for a dataset changes as the dataset were perturbed by reweighting inputs an infinitesimal amount. 
Recall that the maximum likelihood estimator for a dataset with $n$ datapoints $(x_{1:n}, y_{1:n})$ is given by 
\vspace{-4pt}
\begin{align}
    \hat \theta = \argmax_{\theta} \frac{1}{n} \sum_{i=1}^n \log p_{\theta}(y_i\vert x_i).
\end{align}
Influence functions analyze how $\hat \theta$ relates to the MLE of a perturbed dataset
\begin{align}
    \hat \theta_{x, y, \epsilon} = \argmax_{\theta} \left(\epsilon\log p_{\theta}(y\vert x) + \frac{1}{n}\sum_{i=1}^n \log p_{\theta}(y_i\vert x_i)\right),
\end{align}
where $\hat \theta_{x, y,\epsilon}$ is the new MLE if we perturb the training set by adding a datapoint $(x,y)$ with a weight $\epsilon$.
A classical result \citep{cook1982residuals}  shows that $\hat \theta_{x, y, \epsilon}$ is differentiable (under appropriate regularity conditions) with respect to $\epsilon$ with derivative given by the influence function
\begin{align}
    \frac{d\hat \theta_{x, y, \epsilon}}{d\epsilon}\lvert_{\epsilon=0} = -H_{\hat \theta}^{-1} \grad_\theta \log p_{\hat \theta}(y\vert x),
\end{align}
where $\hat \theta$ is the MLE for the original dataset and $H_{\hat \theta}$ the Hessian of the mean training set log-likelihood
evaluated at $\hat \theta$.
CNML computes the MLE after adding datapoint $(x,y)$ with equal weight as points in the training set, which is precisely $\hat \theta_{x, y, \epsilon}$ evaluated at $\epsilon = 1/n$.
Thus, for sufficiently large $n$,
a first order Taylor expansion around $\hat\theta$ should be accurate and the new parameter can be estimated by
\begin{align} 
    \tilde\theta_{x,y} = \hat \theta  - \frac{1}{n} H_{\hat \theta}^{-1} \nabla_\theta \log p_{\hat \theta} (y\vert x), \label{eq:influence-mle}
\end{align}
which is equivalent to solving
\begin{align}
    \tilde \theta_{x,y} = \argmax_\theta & \frac{1}{n}(\theta - \hat \theta)^T \nabla_\theta \log p_{\hat \theta} (y\vert x) \nonumber \\ &+ \frac{1}{2}(\theta - \hat \theta)^T H_{\hat \theta}(\theta - \hat \theta).
\end{align}
This suggests that, with large training datasets, the perturbed MLE parameters $\hat \theta_y$ in Equation $\ref{eq:cnml-mle}$ can be approximated accurately using a quadratic approximation to the training log-likelihood, corresponding to a Gaussian posterior obtained via a Laplace approximation.
We can explicitly quantify the accuracy of this approximation in the theorem below, which is based on Theorem 1 from \citet{giordano2019swiss}, with full details and proof in Appendix \ref{appendix:ij-expanded-details}.
\begin{theorem} \label{adapted-theorem} (Adapted from \citet{giordano2019swiss})
    Consider a training set with $n$ datapoints and an additional datapoint $ (x,y)$.
    Assume assumptions 1-5 hold with constants $C_{op}, C_{\text{IJ}}, \Delta_\delta$ as defined in Appendix \ref{appendix:ij-expanded-details}.
    Let $\hat \theta_{x,y}$ denote the exact MLE if we had appended $(x,y)$ to the training set, and $\tilde \theta_{x,y}$ the parameter obtained via the approximation in Equation $\ref{eq:influence-mle}$.
        Let
    \begin{align}
       \delta = \frac{\sup_{\theta \in \Theta} \max \left\{ \norm{\grad_\theta \log p_{\theta}(y\vert x)}_1, \norm{\grad_{\theta}^2 \log p_{\theta}(y\vert x)}_1\right\} }{n+2}.
    \end{align}
    If $\delta \leq \Delta_\delta$, then
    \begin{align}
        \lVert \hat \theta_{x,y} - \tilde \theta_{x,y}\rVert_2 \leq 2C_{op}^2 C_{\text{IJ}} \delta^2.
    \end{align}
\end{theorem}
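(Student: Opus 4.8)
I would prove this by recognizing that ``appending the point $(x,y)$ to the training set'' is a special case of reweighting the data, and that $\tilde\theta_{x,y}$ is precisely the corresponding infinitesimal-jackknife (first-order Taylor) approximation to the reweighted $M$-estimator; the bound then follows from Theorem~1 of \citet{giordano2019swiss}. Concretely, work with the enlarged dataset of $n+1$ points consisting of the $n$ training points together with $(x,y)$, and consider the weighted maximum-likelihood map $w \mapsto \hat\theta(w)$ defined, as in \citet{giordano2019swiss}, by the (normalized) stationarity condition $\sum_i w_i\,\grad_\theta \log p_{\hat\theta(w)}(y_i\vert x_i) = 0$. Take the base weight vector $w_0$ to assign weight $1$ to each training point and $0$ to $(x,y)$, so that $\hat\theta(w_0) = \hat\theta$ is the training-set MLE; the all-ones vector $w_1$ then gives $\hat\theta(w_1) = \hat\theta_{x,y}$, the MLE after appending $(x,y)$ with equal weight. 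The perturbation $w_1 - w_0$ is supported on the single coordinate of $(x,y)$.

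The next step is to check that the infinitesimal-jackknife approximation to $\hat\theta(w_1)$ expanded about $w_0$ is exactly the estimator $\tilde\theta_{x,y}$ of Equation~\ref{eq:influence-mle}. This is the implicit-function-theorem computation sketched in the excerpt: differentiating the stationarity condition in the $(x,y)$-coordinate at $w_0$ and using $\sum_{i=1}^n \grad_\theta \log p_{\hat\theta}(y_i\vert x_i) = 0$ (so that the relevant Jacobian is a multiple of the training Hessian $H_{\hat\theta}$) gives $\frac{d\hat\theta(w)}{dw_{(x,y)}}\big|_{w_0} = -\tfrac1n H_{\hat\theta}^{-1} \grad_\theta \log p_{\hat\theta}(y\vert x)$; since $w_1$ exceeds $w_0$ by one unit in that coordinate, the first-order expansion is $\hat\theta - \tfrac1n H_{\hat\theta}^{-1}\grad_\theta \log p_{\hat\theta}(y\vert x) = \tilde\theta_{x,y}$. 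With this identification, the quantity $\delta$ appearing in the statement is the appropriately normalized, magnitude-weighted size of the perturbation $w_1 - w_0$: the $\max$ of $\sup_\theta\norm{\grad_\theta \log p_\theta(y\vert x)}_1$ and $\sup_\theta\norm{\grad_\theta^2 \log p_\theta(y\vert x)}_1$ is the uniform bound on the gradient and Hessian contributions of the single new point that enters their constants, and the denominator $n+2$ together with the overall constant factors arise from the normalization conventions of \citet{giordano2019swiss}.

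It then remains to verify that Assumptions~1--5 of \citet{giordano2019swiss}, as restated in Appendix~\ref{appendix:ij-expanded-details}, hold here with the stated constants: compactness (or the relevant local containment) of $\Theta$, enough smoothness of $\theta \mapsto \log p_\theta(y_i\vert x_i)$, non-degeneracy of the training Hessian with $\norm{H_{\hat\theta}^{-1}}_{op}\le C_{op}$, a Lipschitz bound on the Hessian, and uniform boundedness of the per-datapoint gradients and Hessians (which, together with the bounds on the new point, yield $C_{\text{IJ}}$). Given these, Theorem~1 of \citet{giordano2019swiss} yields that whenever $\delta \le \Delta_\delta$ the IJ approximation error is at most $2C_{op}^2 C_{\text{IJ}}\delta^2$, which by the identifications above is exactly $\norm{\hat\theta_{x,y} - \tilde\theta_{x,y}}_2 \le 2C_{op}^2 C_{\text{IJ}}\delta^2$.

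I expect the main obstacle to be bookkeeping rather than a new idea: pinning down the normalization conventions of \citet{giordano2019swiss} so that the perturbation size they control reduces to exactly the stated $\delta$ (in particular tracking the $n+2$ in the denominator and the leading constant $2$), and checking that their assumptions — phrased for a generic fixed dataset with weights perturbed around the all-ones vector — transfer correctly to our situation, where the base point is the all-ones-minus-one-coordinate vector $w_0$ and the ``extra'' weight direction is the new datapoint. A secondary point requiring care is that the constant $C_{\text{IJ}}$ must be taken to absorb the $\sup_{\theta\in\Theta}$ bounds on the new point's gradient and Hessian, not merely those on the training points, which is why the training-independent quantity $\delta$ appears in the final bound.
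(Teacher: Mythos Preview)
Your overall strategy---cast ``append $(x,y)$'' as a reweighting of an enlarged dataset and invoke Theorem~1 of \citet{giordano2019swiss}---is exactly right, and your identification of $\tilde\theta_{x,y}$ with the IJ linearization is correct. The difference from the paper is in the specific reduction, and that difference matters.

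You work with $n+1$ points and take the base weight $w_0=(1,\dots,1,0)$, then perturb to $w_1=(1,\dots,1,1)$. As you yourself flag, Giordano's theorem is stated with the \emph{all-ones} vector as the expansion point: the constants $C_{op}$, $L_h$, and the objects $G(\theta,1_w)$, $H(\theta,1_w)$ that enter Assumptions~2--4 and Definition~1 are all evaluated at $1_w$. With your $w_0\neq 1_w$ you would either have to redo the proof of their theorem around a different base point or argue that it ``transfers,'' and you do not yet have a mechanism for that. The paper sidesteps this entirely with a small trick: it takes $N=n+2$ by adjoining \emph{two} artificial datapoints, one with gradient $-\nabla_\theta\log p_\theta(y\vert x)$ and one with $+\nabla_\theta\log p_\theta(y\vert x)$. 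Under the uniform weighting $1_w$ these cancel, so $G(\theta,1_w)$ and $H(\theta,1_w)$ are exactly the training-set quantities and $\hat\theta(1_w)=\hat\theta$ is the training MLE, while the target weight $\bar w$ (all ones with a $2$ in the last slot) yields the MLE with $(x,y)$ appended. Now the expansion point \emph{is} $1_w$, Giordano's theorem applies verbatim, and Condition~1 for $\bar w-1_w$ reduces to bounding $\tfrac{1}{N}\sup_\theta\norm{\nabla_\theta\log p_\theta(y\vert x)}_1$ and $\tfrac{1}{N}\sup_\theta\norm{\nabla_\theta^2\log p_\theta(y\vert x)}_1$ with $N=n+2$. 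This is precisely the $\delta$ in the statement and is where the $n+2$ denominator comes from; in your $n{+}1$-point setup no natural $n+2$ appears, which is why you were left to attribute it to unspecified ``normalization conventions.''

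One smaller correction: $C_{\text{IJ}}=1+DC_wL_hC_{op}$ does not absorb the new point's gradient/Hessian bounds---those enter only through $\delta$. What you need from the new point is solely the $\sup_\theta$ bounds that define $\delta$; the constants $C_{op},C_h,L_h,C_w$ are properties of the (augmented) dataset under $1_w$, which by the cancelling-pair construction coincide with the training-set quantities.
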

Given such a bound on how accurately we estimate new parameters, we can explicitly quantify the accuracy of the CNML approximation, with proof in Appendix \ref{appendix:ij-expanded-details}.
\begin{prop} \label{cnml-logit-accuracy-bound}
Let $\hat \theta_{x,y}$ and $\tilde \theta_{x,y}$ be the exact and approximate MLEs respectively, after appending the datapoint $(x,y)$ to the training set, and assume $\lVert \hat \theta_{x,y} - \tilde \theta_{x,y}\rVert \leq \delta$ for all $y$. 
Further suppose $\log p_{\theta}(y\vert x)$ is $L$-Lipschitz in $\theta$.

Let $p_{\text{CNML}}(y) \propto p_{\hat \theta_{x,y}}(y\vert x)$ and $p_{\text{ACNML}}(y) \propto p_{\tilde \theta_{x,y}}(y\vert x)$ denote the exact CNML and approximate CNML distributions respectively.
We then have
\begin{align}
    \sup_{y} \abs{\log p_{\text{CNML}}(y) - \log p_{\text{ACNML}}(y)} \leq 2 L\delta.
\end{align}
\end{prop}
Theorem 3.1 and Proposition 3.2 suggest the approximation given by ACNML will be increasingly close to the exact CNML distribution as the training set size $n$ grows. 
However, this formal theoretical result only holds for sufficiently large datasets and under assumptions including smoothness and convexity of the training loss, so does not necessarily hold in practical settings with deep neural networks.

To interpret how different training points influence the predictions of neural networks, \citet{pmlr-v70-koh17a} showed that influence function approximations were able to provide useful predictions for estimating leave-one-out retraining with deep convolutional neural networks. This closely resembles the conditions we encounter when computing parameters for each label of the query point with ACNML, with the key difference being that ACNML \textit{adds} a datapoint while leave-one-out retraining \textit{removes} one.
Their empirical results suggest these second-order approximations to the training loss, corresponding to Gaussian approximations in ACNML, may suffice to yield useful predictions about how parameters change when the query point is added, despite lacking formal guarantees with deep neural networks.

\section{Related Work}
Minimum description length has been used to motivate neural network methods dating back to \citet{Hinton1993KeepingWeights}, who treat description length as a regularizer to mitigate overfitting. The idea of preferring flat minima \citep{Hochreiter1997FlatMinima} also has its origins in the MDL framework, as it allows a coarser discretization of the weights (and thus fewer bits needed). 
Bayesian methods average the predictions of different models sampled from the posterior distribution and typically serve as the starting point for uncertainty estimation in deep networks. A common approach is to use simple tractable distributions to approximate the true posterior \citep{Hoffman2013StochasticInference, Blundell2015WeightNetworks, ritter2018scalable}. 
Recent work \citep{Maddox2019ALearning, Dusenberry2020EfficientFactors} has shown simple Gaussian posterior approximations are able to achieve well-calibrated predictions with marginalization.
ACNML utilizes these approximate posterior methods, but in contrast to the Bayesian methods, where the posterior is used to efficiently sample models for Bayesian model averaging, ACNML uses the posterior density to enable efficient optimization without needing to retain the training data. 

\citet{ovadia2019trust} evaluate various proposed methods for uncertainty estimates in deep learning under different types of distribution shift, finding that good calibration on in-distribution points did not necessarily indicate good calibration under distribution shift, and that methods relying on marginalizing predictions over multiple models \citep{Lakshminarayanan2016SimpleEnsembles, Srivastava2014Dropout:Overfitting} gave better uncertainty estimates under distribution shift than other techniques. 
In our experiments, we show that our method ACNML maintains much better calibration under distribution shift than prior methods.

Similarly to ACNML, Test Time Training (TTT) \citep{sun19ttt} updates the model on test inputs to improve out-of-distribution performance. One key differences is that TTT relies on an auxiliary self-supervised task to solve on the new test point, and so requires domain knowledge to specify a nontrivial task that is useful for predictions. Additionally, the goal of TTT was to enable \textit{more accurate} prediction under distribution shift, whereas our goal with ACNML was to provide more reliable \textit{uncertainty estimates}.




Perhaps most closely related to our work, \citet{Fogel2018UniversalLog-Loss} advocate for the use of the CNML distribution in the context of supervised learning (under the name predictive NML), citing its minimax properties.
\citet{Bibas2019DeepNetworks} estimate the CNML distribution with deep networks by finetuning the last layers of the network on every test input and label combination appended to the training set.
Since this finetuning procedure trains for several epochs, it is very computationally intensive at test-time and requires continued access to the entire training set when evaluating.
In contrast, our method amortizes this procedure by condensing the information in the training data into a distribution over parameters, allowing for much faster test-time inference without needing the training data.

In the analysis for our approximation, we draw connections to influence functions \citep{cook1982residuals}, which have been studied as asymptotic approximations to how $M$-estimators change when perturbing a dataset.
In deep learning, \citet{pmlr-v70-koh17a} advocated for using influence functions to interpret neural nets, generate adversarial examples, and diagnose errors in datasets.
We use a theorem from \citet{giordano2019swiss}, which broadened the necessary assumptions for these infinitisemal approximations to be accurate and provides explicit guarantees for specific datasets rather than simply asymptotic results.

\begin{figure*}[h!]
\centering
\begin{subfigure}[t]{0.48\linewidth}
    \includegraphics[width=\linewidth]{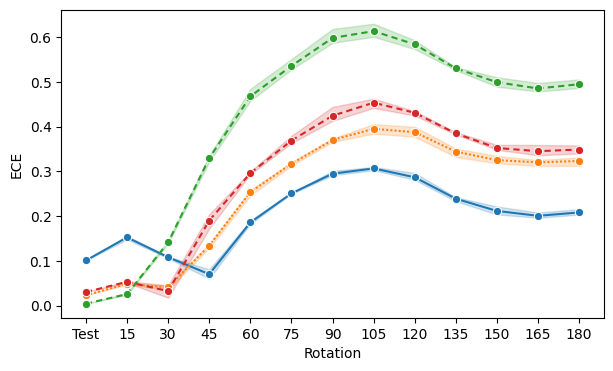}
    \vspace{-16pt}
    \caption{Rotated MNIST ECEs (lower is better)}
    \label{fig:mnist-ece-line}
\end{subfigure}%
\begin{subfigure}[t]{.48\linewidth}
    \includegraphics[width=\linewidth]{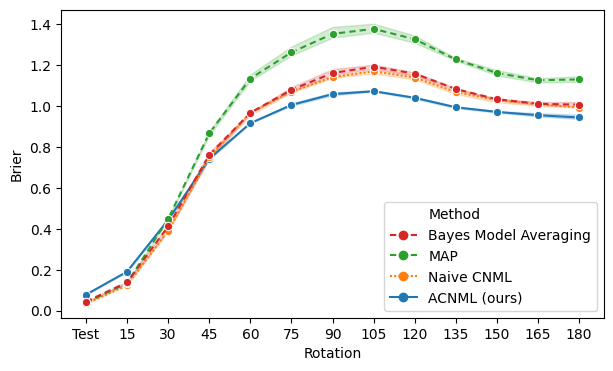}
    \vspace{-16pt}
    \caption{Rotated MNIST Brier Scores (lower is better)}
    \label{fig:mnist-brier-line}
\end{subfigure}%
\vspace{-6pt}
\caption{ACNML compared against its Bayesian counterpart,
the deterministic MAP baseline, and naive CNML on rotated MNIST. We plot means and standard deviations across 3 seeds. We see that ACNML (blue, solid lines)
achieves lower ECE as the distribution shift becomes more severe and accuracy decreases, as well as better Brier scores than other methods.}
\label{fig:rotated-mnist-lineplots}
\end{figure*}

\section{Experiments}
Our experiments aim to evaluate how trustworthy the uncertainty estimates provided by ACNML are under different levels of distribution shift. 
Following \citet{ovadia2019trust}, we compare uncertainty estimation across different methods using Brier score and expected calibration error (ECE) \citep{Naeini2015ObtainingBinning}. 
Brier score is a proper scoring rule, which captures both how accurate and how calibrated the predictions are, while
ECE assesses calibration by directly measuring how closely the predicted confidence corresponds to empirical accuracy.
We show that our method is able to to significantly outperform prior works in terms of calibration when distribution shifts became more extreme.
While severe distribution shifts mean all methods test perform poorly in terms of accuracy, ACNML is at least able to more reliably indicate when the predictions may be incorrect.

In principle, any method for computing a tractable posterior over parameters can be used with ACNML, and we demonstrate this flexibility by implementing ACNML on top of several different approximate posteriors. By using the exact same posteriors, we can directly compare how uncertainty estimates given by ACNML relate to those of the corresponding Bayesian method.

For each model, we report results across 3 seeds. 
as well as showing reliability diagrams \citep{Guo2017OnNetworks} to further qualitatively assess calibration.
For reliability diagrams, we sort data points by confidence and divide them into twenty equal sized buckets, plotting the mean accuracy against the mean confidence for each bucket. This allows to see qualitatively see how well the confidence of the prediction relates to the actual accuracy, as well as showing how the confidences are distributed for each method.

\begin{figure*}[t]
    \centering
        \begin{subfigure}[t]{0.33\linewidth}
            \centering
            \includegraphics[width=\columnwidth]{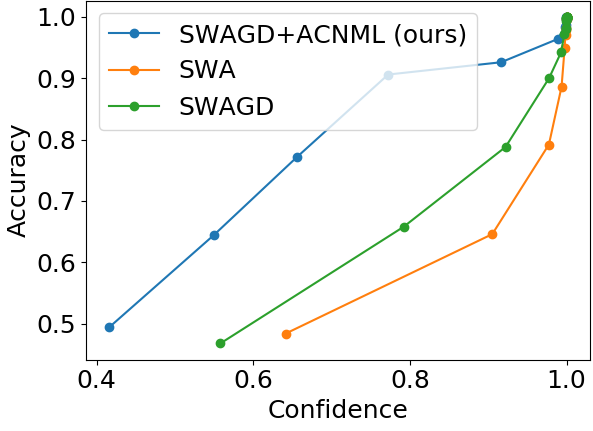}
            \caption{CIFAR10 Test}
            \label{fig:reliability:cifar-vgg}
        \end{subfigure}%
        \begin{subfigure}[t]{0.33\linewidth}
            \centering
            \includegraphics[width=\columnwidth]{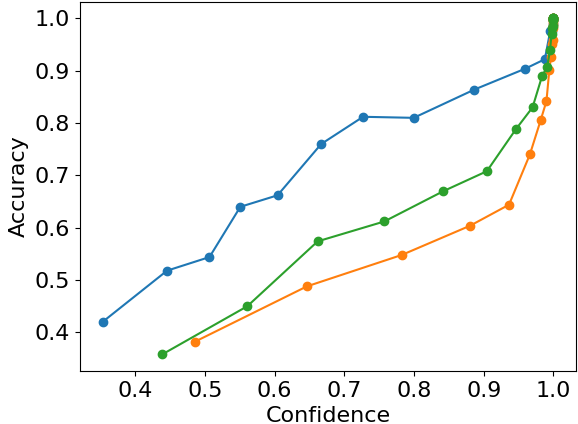}
            \caption{CIFAR10-C Corruption Level 3 }
            \label{fig:reliability:stl-vgg}
        \end{subfigure}%
        \begin{subfigure}[t]{0.33\linewidth}
            \centering
            \includegraphics[width=\columnwidth]{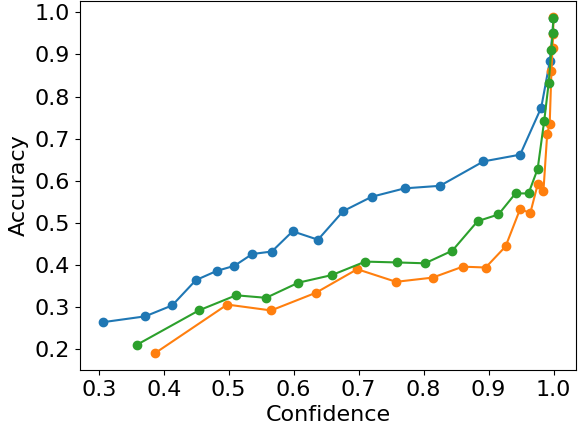}
            \caption{CIFAR10-C Corruption Level 5 }
            \label{fig:reliability:stl-wrn}
        \end{subfigure}%
        \vspace{-6pt}
        \caption{Reliability diagrams plotting confidence vs. accuracy for CIFAR10 in-distribution and OOD data. ACNML provides more conservative predictions than other methods, resulting in better calibration on OOD inputs. For OOD tasks, we show results for the Gaussian blur corruption at levels 3 and 5, with level 5 corresponding to a higher amount of corruption.
        Each point shows the mean confidence and accuracy within a bucket, so the spread of points along the $x$-axis shows that ACNML makes more low confidence predictions than other methods.}
    \vspace{-10pt}
    \label{fig:all_reliability_diagrams}
\end{figure*}

\begin{figure*}[t]
    \centering
\begin{subfigure}[t]{0.48\linewidth}
    \includegraphics[width=\linewidth]{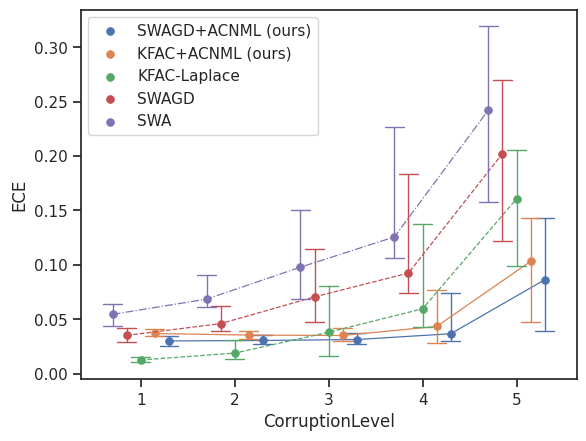}
    \caption{CIFAR10C VGG16 ECEs (lower is better)}
    \label{fig:cifarc-vgg-ece-line}
\end{subfigure}
\begin{subfigure}[t]{.47\linewidth}
    \includegraphics[width=\linewidth]{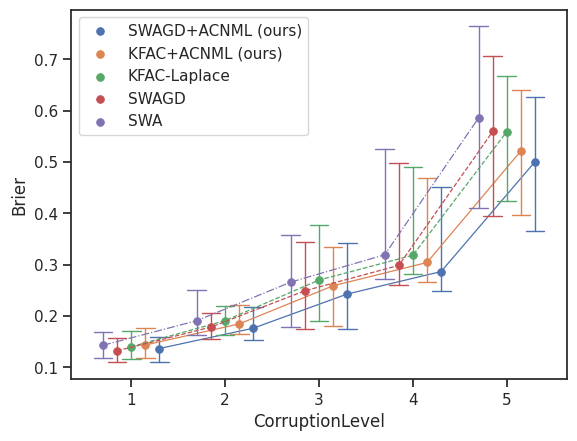}
    \caption{CIFAR10C VGG16 Brier Scores (lower is better)}
    \label{fig:cifarc-vgg-brier-line}
\end{subfigure}%
\vspace{-8pt}
\caption{ACNML compared against corresponding Bayesian methods and the deterministic MAP baseline on out-of-distribution CIFAR10-Corrupted datsets. We plot medians and 95\% confidence intervals across all corruptions. We see that ACNML methods (solid lines) achieve much lower ECE at higher corruption values, as well as better Brier scores than other methods.
}
\vspace{-10pt}
\label{fig:cifarc-lineplots-vgg}
\end{figure*}

\noindent \textbf{Rotated MNIST.} 
We first consider the rotated MNIST task, where out-of-distribution inputs are generated by rotating images from the MNIST test set, with higher levels rotation corresponding to more distribution shift.
Here, ACNML is implemented on top of Bayes-by-backprop~\citep{Blundell2015WeightNetworks}, and we compare to the MAP estimate and Bayes model averaging with the same posterior. 

We see in Figure \ref{fig:rotated-mnist-lineplots} that for higher levels of rotation, corresponding to more out-of-distribution inputs, that ACNML exhibits \textbf{substantial improvements in calibration} as measured by the ECE metric, as well as improved Brier scores.
However, on the in-distribution test set and the lowest levels of rotation where the models still predict accurately, ACNML's predictions are overly conservative, leading to underconfident predictions and worse calibration than other methods.
In general, this agrees with what we expect from ACNML: the predictions are more conservative across the board, which does not necessarily improve results in-distribution, particularly for easy domains like MNIST, but offer considerable improvements in calibration for out-of-distribution inputs where errors are prevalent.

We additionally compare to a much more computationally expensive instantiation of CNML used by \citet{Bibas2019ARegression} (denoted naive CNML in Figure \ref{fig:rotated-mnist-lineplots}), which directly finetunes for several epochs using the training set to obtain the optimal parameters for each query point and label, rather than using the approximate posterior like ACNML does. This direct instantiation of CNML improves over the MAP solution in terms of Brier score and calibration on the OOD inputs. However, it is computationally prohibitive, to the point where we were unable to evaluate it on the more complex datasets. 
On MNIST, each prediction with naive CNML was hundreds of times slower than with ACNML, as shown in Table \ref{tab:timing}. 
We also find ACNML is overall more conservative when using this particular posterior approximation, resulting in better calibration on more OOD inputs (see Appendix \ref{appendix:cnml-vs-acnml} for more detailed comparisons between ACNML and na\"{i}ve CNML). 

\noindent \textbf{CIFAR Corruptions.} 
We use CIFAR10 \citep{Krizhevsky2012LearningImages} for training and in-distribution testing, and evaluate uncertainty estimates under distribution shift using the CIFAR10-Corrupted \citep{hendrycks2019benchmarking} datasets, which apply different severities of 15 common corruptions to the test set images. 
We can thus assess calibration over a wide variety of distribution shifts, as well as how calibration degrades as distribution shift increases.

We show results here using the VGG16 \citep{Simonyan2014VeryRecognition} architecture.
To compute approximate posteriors, we use Stochastic Weight Averaging - Gaussian (SWAG) \citep{Maddox2019ALearning}, and KFAC-Laplace \citep{ritter2018scalable}.
SWAG computes a posterior by fitting a Gaussian distribution to the trajectory of SGD iterates. For simplicity and computational efficiency, we instantiate ACNML with the SWAG-D variant, which uses a Gaussian with diagonal covariance.
KFAC-Laplace uses a Gaussian posterior approximation with the MAP solution as the mean and the inverse Hessian of the loss as covariance, approximating the Hessian using KFAC \citep{martens2015optimizing}.

Focusing on the most direct comparisons, we compare against the MAP solution for the given posterior, which is equivalent to Stochastic Weight Averaging (SWA) \citep{Izmailov2018AveragingGeneralization},
and Bayes model averaging with SWAGD and KFAC-Laplace, which provide apples-to-apples comparisons to the two versions of our method that directly utilize the same posteriors from these prior approaches.

Examining the reliability diagrams in \autoref{fig:all_reliability_diagrams}, we can qualitatively see that ACNML provides more conservative (less confident) predictions than other methods across different levels of corruption. 
On out-of-distribution inputs, where accuracy degrades, we see that ACNML's conservative predictions lead to many better calibrated low-confidence predictions, while other methods drastically overestimate confidence. Thus, ACNML's confidence estimates are still able reliably indicate when predictions are likely to be incorrect even on OOD inputs. ACNML is however slightly \textit{underconfident} on the in-distribution CIFAR10 test set, while other methods err on the side of being overconfident.

In Figure \ref{fig:cifarc-lineplots-vgg}, we can quantitatively compare the calibration of different methods for different levels of corruption.
ACNML variants provide \textbf{much better calibration on the more severe corruptions} than other methods while also performing slightly better in terms of Brier score. All methods perform similarly in terms of accuracy in all domains, and we find that ACNML's more conservative estimates also perform competitively with Bayesian methods in Brier score, and ECE on the in-distribution test set as well (see Table \ref{tab:nll-acc-ece-indist} in Appendix \ref{appendix:more-exp-results}).
We include additional comparisons across other methods and architectures in Appendix \ref{appendix:more-exp-results}.
\begin{table}[h]

\scalebox{0.8}{
\begin{tabular}{|c|c|c|c|}
			\hline 
			& MNIST MLP  & VGG16 &  WRN28x10 \\ 
			\hline 
			ACNML (ours) & 0.08s & 0.37s & 1.1s \\ 
			na\"ive CNML (per epoch) & 13.83s & 102.0s & 359.1s \\ 
			feedforward inference & 0.0001s & 0.0013s & 0.004s \\ 
			\hline 
		\end{tabular} 
}
\caption{Inference time per input (in seconds). 
}
\vspace{-15pt}
\label{tab:timing}
\end{table}

\noindent \textbf{Timing Comparison vs. standard CNML}. In \autoref{tab:timing}, we examine the computational costs of our method.
We compare against a na\"ive implementation of CNML that fine-tunes for $N$ epochs on each test point and label, as in \citet{Bibas2019DeepNetworks}. 
In total, predicting a single input with $k$ possible labels involves running $kN$ epochs of training. While ACNML is over two orders of magnitude faster than na\"ive CNML even with just a single epoch of training (our experiments with naive CNML on MNIST used 5 epochs),
it is still slower than standard inference. 
The computational requirements of our method also scale linearly with the number of classes, but are constant with respect to dataset size.
Timing experiments are run using a single NVIDIA 1080Ti, using MNIST for the MNIST MLP timing results and using CIFAR10 for VGG16 and WideResNet28x10, with no parallelization over data points.

\section{Discussion}
In this paper, we present amortized CNML (ACNML)
as an alternative to Bayesian marginalization for obtaining reliable uncertainty estimates and calibrated predictions under distribution shift. 
The CNML distribution is a theoretically well-motivated strategy derived from the MDL principle with strong minimax optimality properties, but actually evaluating this distribution is computationally daunting. 
ACNML utilizes approximate Bayesian posteriors to tractably approximate it, can be instantiated on top of a wide range of approximate Bayesian methods, and provides much better calibrated predictions than other methods as the inputs become more out-of-distribution.
We view ACNML as a step towards practical uncertainty aware predictions that would be essential for real-world decision making.
Future work could further expand on our proposed method, for example by combining ACNML with more complex and expressive posterior approximations.
In particular, training losses are highly non-convex and have many local minima, so incorporating local approximations around \textit{multiple} diverse minima could allow for even more reliable uncertainty estimation.
More broadly, tractable algorithms inspired by ACNML could in the future provide for substantial improvement in our ability to produce accurate and reliable confidence estimates on out-of-distribution inputs, improving the reliability and safety of learning systems.
\
\bibliographystyle{abbrvnat}

\bibliography{bib}

\appendix
\appendix
\newpage
\hspace{0.1pt}
\newpage
\section{Experimental Details}\label{appendix:exp-details}

For obtaining approximate posteriors with SWAG and KFAC-Laplace, we follow the exact training procedures given in \citet{Maddox2019ALearning}.
We then implement ACNML on top of the diagonal SWAG posterior and the KFAC-Laplace posterior.

The variance of the SWAG posterior depends in a complex way on the learning rate and gradient covariances. To account for this, we introduce an additional temperature hyperparameter $\alpha$ and solve for the ACNML approximation using
\begin{equation}
    \theta^* = \argmax_{\theta \in \Theta} \log p_{\theta}(y_n \vert x_n) + \frac{1}{\alpha}\log q(\theta). 
\end{equation}
To calibrate $\alpha$, we can calculate the CNML distribution using a validation set, by training on the entire training set and the validation point, and then selecting $\alpha$ such that our ACNML procedure produces similar likelihoods.
We can also treat $\alpha$ as a tunable hyperparameter and select it using a validation set, similarly to how temperature scaling \citep{Guo2017OnNetworks} is used to achieve better calibration for prediction, or how the relative weighting of priors and likelihoods are used in generalized Bayesian inference \citep{Vovk1990AggregatingStrategies} or safe Bayesian inference \citep{Grunwald2017InconsistencyIt} as a way to deal with model misspecification.
For our experiments using the SWAGD posterior, we heuristically tune $\alpha$ to be as large as possible without degrading the accuracy compared to the MAP solution. 
Note, however, that this procedure is specific to the particular way in which SWAG estimates the parameter distribution, and any posterior inference procedure that explicitly approximates the posterior likelihood (e.g., \citet{Blundell2015WeightNetworks}) would not require this step.
To select $\alpha$ for each model class, we swept over values $[0.25, 0.5, 1, 1.5, 2]$ and selected the highest value such that accuracy and NLL on the validation set did not degrade significantly compared to SWA. 
For VGG16, we use $\alpha=0.5$ and for WideResNet28x10, we used $\alpha=1.5$.

\textbf{ACNML Optimization Details}: 
With our approximate posterior $q(\theta)$ being a Gaussian with covariance $\Sigma$, we approximately compute the MAP solution for each label $y$ as per Algorithm~\ref{alg:acnml} by initializing $\theta_0$ to be the posterior mean and iterating
\begin{equation}
    \theta_{t+1} = \theta_t + \epsilon_t \Sigma (\alpha \nabla \log p_{\theta_t}(y\vert x_n) + \nabla \log q(\theta_t)),
\end{equation}
using the covariance as a preconditioner. 
Similarly to the influence function calculation for the post update parameters discussed in section \ref{sec:theory}, this corresponds to taking approximate Newton steps at each iteration, using the Hessian approximation of the training set given by our approximate posterior.
For our experiments, we used a constant step size $\epsilon=0.5$ for the SWAG-D and BBP posteriors, and $\epsilon=0.25$ with KFAC-Laplace. 
We empirically found that 5 steps was often enough to find an approximate stationary point with the SWAG-D posterior, and 10 steps for the KFAC-Laplace posterior. 

For the reliability diagrams in Figure \ref{fig:all_reliability_diagrams}, we again follow the procedure used by \citet{Maddox2019ALearning}. 
We first divide the points into twenty bins uniformly based on confidence (each bin has the same number of points), then plot the mean accuracy vs mean confidence within each bin. This differs from the reliability diagrams used by \citet{Guo2017OnNetworks}, where they divide the range of confidence values into bins uniformly, resulting in unevenly filled bins.

For our expected calibration error (ECE) numbers, we use the same bins as computed for our reliability diagrams, and compute 
\begin{equation}
    ECE = \sum_{i=1}^K P(i) \cdot \abs{o_i - e_i},
\end{equation}
where $P(i)$ is the empirical probability a randomly chosen point lies in bin $i$, $o_i$ is the accuracy within bin $i$, and $e_i$ is the average confidence in bin $i$.

We adapted the SWAG authors' implementation at \hyperlink{https://github.com/wjmaddox/swa_gaussian}{https://github.com/wjmaddox/swa\_gaussian} to include the ACNML procedure for test time evaluation, and include a copy of the modified codebase in the supplementary materials with instructions on how to reproduce our experiments.
Experiments were conducted using a mix of local GPU servers and Google Cloud Program compute resources.

\textbf{MNIST Experimental Details}: For the MNIST experiments, we used a feedforward network with 2 hidden layers of size $1200$, with no data augmentation. 
The posterior is factored as independent Gaussians for each parameter, with the prior for each parameter being a zero-mean Gaussian with standard deviation 0.1.

We include expanded results with additional metrics in Figure \ref{fig:mnist-lineplots-expanded}.

\section{Further Experimental Results and Comparisons on CIFAR10} \label{appendix:more-exp-results}
In addition to the comparisons in the main paper, we additionally compare to SWA-Gaussian (SWAG), which uses a more expressive posterior than SWAG-D, SWA with Monte Carlo Dropout \citep{Gal2015DropoutLearning} (SWA-Drop), and SWA-Ensemble, which averages the predictions of independent runs of SWA as with regular deep ensembles \citep{Lakshminarayanan2016SimpleEnsembles}.
For reference, we show in-distribution performance of all methods in Table $\ref{tab:nll-acc-ece-indist}$.
Overall, performance differences between all methods are quite small, and ACNML's conservative predictions do not improve on NLL or ECE over some baselines on in-distribution performance, which is to be expected, since the main aim of our method is produce more calibrated predictions on \textbf{out-of-distribution} tasks.

For all ensembling or Bayesian marginalization methods, we marginalize over 30 model samples.

For completeness, we show expanded results on CIFAR10-Corrupted in Figures \ref{fig:cifarc-results-vgg}, \ref{fig:cifarc-results-vgg-drop}, and \ref{fig:cifarc-results-wrn}, which include additional baselines and metrics.
ACNML consistently achieves significantly better ECE than prior methods on the more severe corruptions, and generally comparable or slightly better NLL and Brier scores to the best performing baselines.
With the same architecture, all methods generally have very similar accuracy, with the exception of SWA-Ensemble slightly outperforming better than other methods in accuracy.

While evaluating MC-Dropout, we found that adding dropout before each layer in VGG16 (labelled VGG16Drop in \ref{fig:cifarc-results-vgg-drop}) significantly improved performance on CIFAR10-C.
For fair comparisons, we reran all methods with the VGG16Drop architecture as well. We again find that ACNML performs the best in terms of calibration on the more severe corruptions.

\begin{table*}
\centering
\scalebox{.7}{
\begin{tabular}{|c|ccc|ccc|}
			\hline 
			\multicolumn{1}{|c}{CIFAR10 Results} & \multicolumn{3}{|c}{VGG16} & \multicolumn{3}{|c|}{WideResNet28x10}\\
			\hline
		     & NLL & Accuracy  & ECE & NLL & Accuracy & ECE \\ 
			\hline 
			ACNML-SWAGD (ours) & $0.2167\pm0.0041$ & $93.23\pm 0.09$ & $0.0115\pm0.0010$ &
			$0.1130\pm 0.0012$ & $96.38 \pm 0.03$ & $0.0122\pm 0.0006$\\
			ACNML-KFAC (ours) & $0.2329\pm0.0028$ & $93.14\pm0.08$ & $0.0361\pm 0.0016$ & - & - & - \\
			MAP (SWA) & $0.2694\pm0.0056$ & $93.23\pm0.13$  & $0.0430\pm0.0010$ & $0.1128 \pm 0.0014$ & $96.41\pm0.01$ & $0.0099 \pm 0.0004$\\
			SWAGD & $0.2257\pm0.0047$ & $93.31\pm0.04$ & $0.0284\pm0.0002$ &
			$0.1125 \pm 0.0012$ & $96.28\pm 0.04$ & $\mathbf{0.0042} \pm  0.0003$ \\ 
			SWAG & $0.2016 \pm 0.0031$ & $93.60 \pm 0.10$ & $0.0158 \pm 0.0030$ & $ 0.1122 \pm 0.0009$ & $96.32 \pm 0.08$ & $0.0088 \pm 0.0006$  \\ 
			KFAC-Laplace & $0.2236\pm0.0013$ & $92.76\pm0.11$ & $\mathbf{0.0097}\pm 0.0005$ & $0.1197\pm0.0031$ & $96.23\pm0.02$ & $0.0111 \pm 0.0006$\\
			SWA-Dropout & $0.2562\pm0.0025$ & $92.85\pm0.14$ & $0.0380\pm0.0007$ & $0.1111\pm 0.0024$ & $96.36\pm0.09$ &     $0.0107 \pm 0.0008$\\
			SWA-Temp & $0.2481 \pm 0.0245$ & $93.61\pm 0.11$ & $0.0366 \pm 0.0063$ & $\mathbf{0.1064} \pm 0.0004$ & $96.46 \pm 0.04$ & $0.0080 \pm 0.0007$ \\
			SGD & $0.3285\pm0.0139$ & $93.17\pm0.14$ & $0.0483\pm0.0022$ & $0.1294\pm0.0022$ & $96.41 \pm 0.10$ & $0.0166 \pm 0.0007$ \\
			SWA-Ensemble & $\textbf{0.17867}$ & $\textbf{94.36}$ & $0.0148$  & $\textbf{0.1036}$ & $\textbf{96.53}$ & $0.0068$  \\
			\hline
		\end{tabular} 
}
\caption{\textbf{In-distribution comparative results}
We see that for in-distribution performance, ACNML variants perform comparably to other methods, without large separations between most methods.
Results for SWA-Temp and SGD are taken from \citet{Maddox2019ALearning}.}
\label{tab:nll-acc-ece-indist}
\end{table*}

\begin{figure*}
    \centering
\begin{subfigure}[t]{.7\linewidth}
    \includegraphics[width=\linewidth]{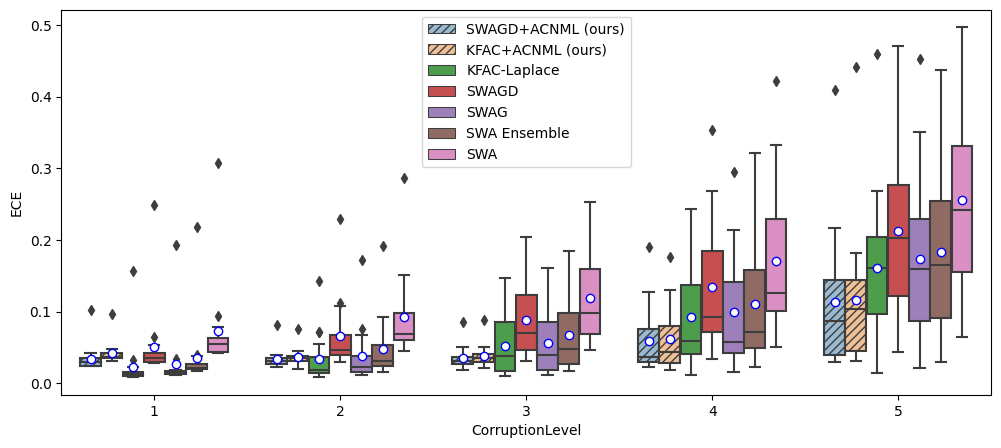}
    \vspace*{-6mm}
    \caption{CIFAR10C VGG16 ECEs (lower is better)}
    \label{fig:cifarc-vgg-ece}
\end{subfigure}%
\\
    \begin{subfigure}[t]{.7\linewidth}
    \includegraphics[width=\linewidth]{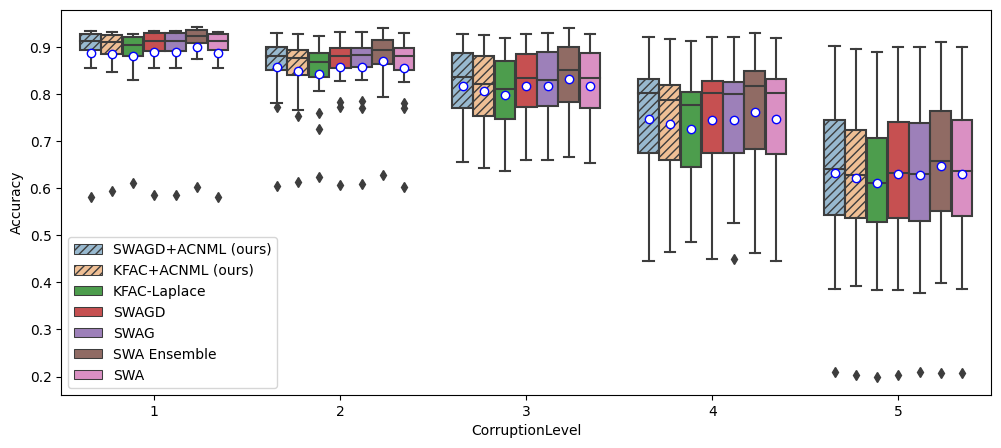}
    \vspace*{-6mm}
    \caption{CIFAR10C VGG16 Accuracies (higher is better)}
    \label{fig:cifarc-vgg-nll}
\end{subfigure}%
\\
\begin{subfigure}[t]{.7\linewidth}
    \includegraphics[width=\linewidth]{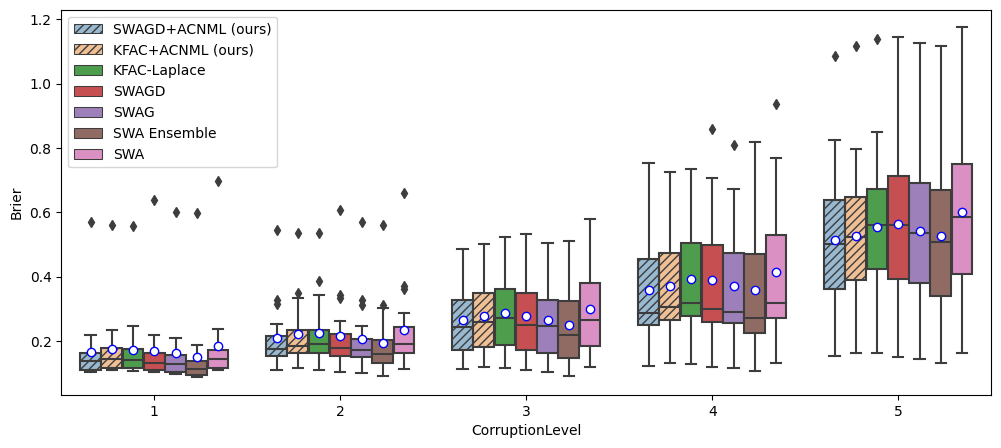}
    \vspace*{-6mm}
    \caption{CIFAR10C VGG16 Brier scores (lower is better)}
    \label{fig:cifarc-vgg-nll}
\end{subfigure}%
\\
\begin{subfigure}[t]{.7\linewidth}
    \includegraphics[width=\linewidth]{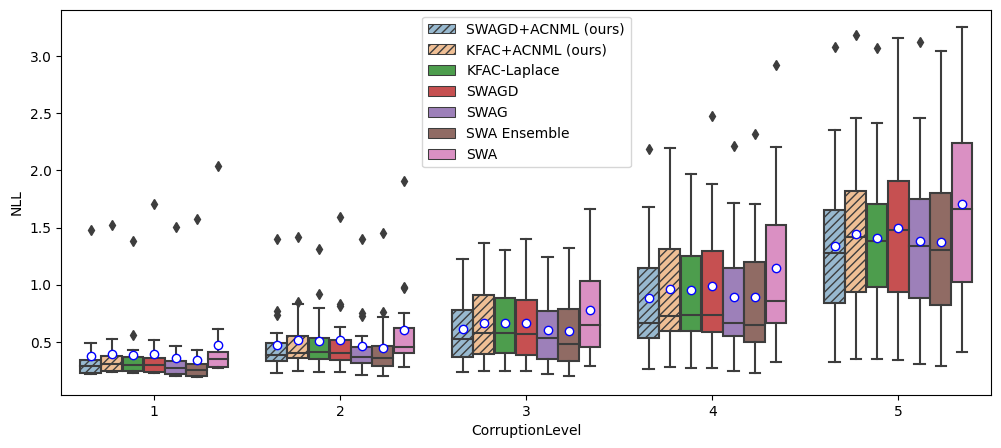}
    \vspace*{-6mm}
    \caption{CIFAR10C VGG16 NLLs (lower is better)}
    \label{fig:cifarc-vgg-nll}
\end{subfigure}%
\vspace*{-2mm}
\caption{
CIFAR10-C performance with the VGG16 architecture. Instantations of our methods are shown in stripes.
Boxplots show quartiles of each statistic over all different corruption types of the given intensity, with the mean indicated by a circle. Both ACNML variants attain significantly better ECE (a) on the more severe corruptions, as the images move further out of distribution.}
\label{fig:cifarc-results-vgg}
\end{figure*}

\begin{figure*}
    \centering
    
\begin{subfigure}[t]{.7\linewidth}
    \includegraphics[width=\linewidth]{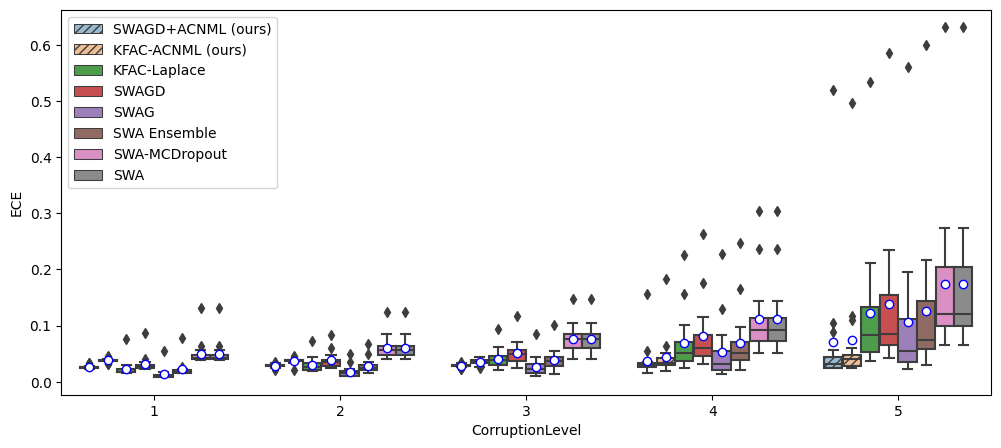}
    \vspace*{-6mm}
    \caption{CIFAR10C VGG16Drop ECEs (lower is better)}
    \label{fig:cifarc-vggdrop-ece}
\end{subfigure}%
\\
    \begin{subfigure}[t]{.7\linewidth}
    \includegraphics[width=\linewidth]{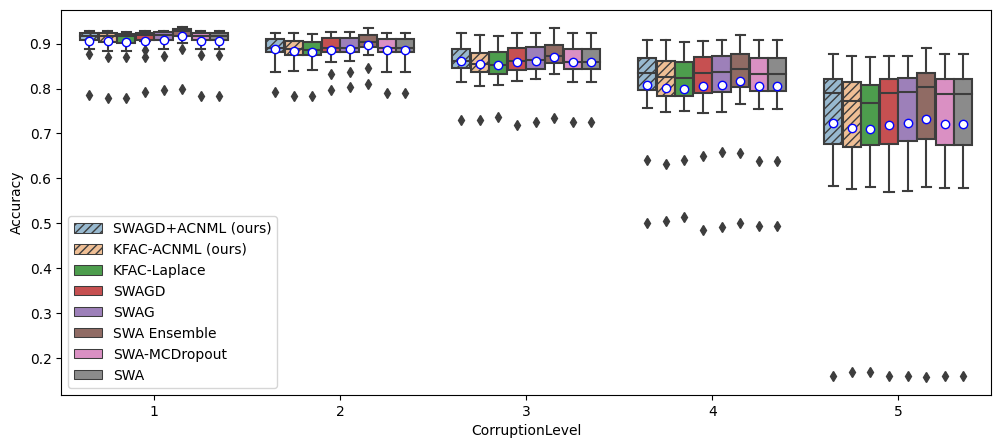}
    \vspace*{-6mm}
    \caption{CIFAR10C VGG16Drop Accuracies (higher is better)}
    \label{fig:cifarc-vggdrop-nll}
\end{subfigure}%
\\  
\begin{subfigure}[t]{.7\linewidth}
    \includegraphics[width=\linewidth]{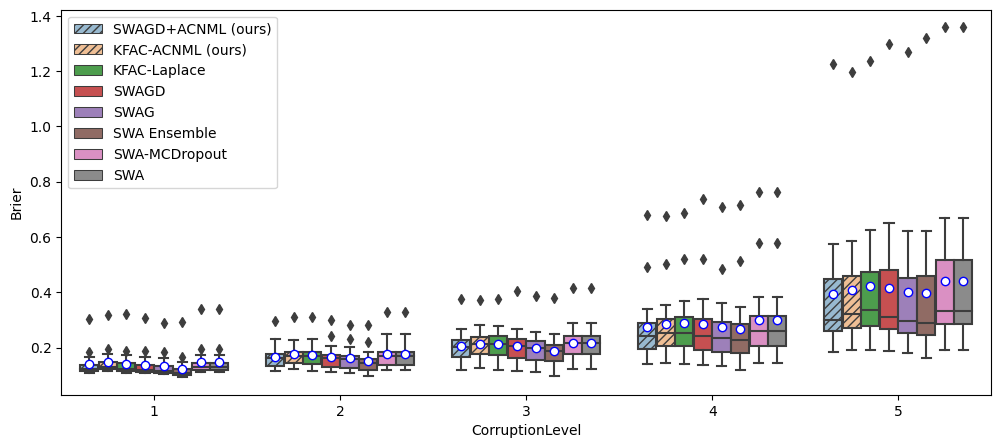}
    \vspace*{-6mm}
    \caption{CIFAR10C VGG16Drop Brier scores (lower is better)}
    \label{fig:cifarc-vggdrop-nll}
\end{subfigure}%
\\
\begin{subfigure}[t]{.7\linewidth}
    \includegraphics[width=\linewidth]{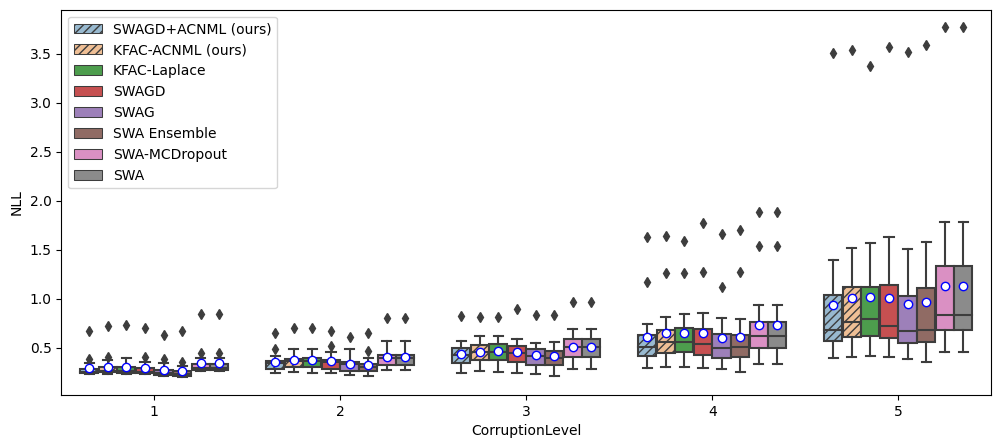}
    \vspace*{-6mm}
    \caption{CIFAR10C VGG16Drop NLLs (lower is better)}
    \label{fig:cifarc-vgg-nll}
\end{subfigure}%
    \vspace*{-2mm}
\caption{
CIFAR10-C performance with the VGG16Drop architecture. Instantations of our methods are shown in stripes.
Boxplots show quartiles of each statistic over all different corruption types of the given intensity, with the mean indicated by a circle. Again, both ACNML variants attain significantly better ECE (a) on the more severe corruptions, as the images move further out of distribution.}
\label{fig:cifarc-results-vgg-drop}
\end{figure*}

\begin{figure*}
    \centering
    \begin{subfigure}[t]{.7\linewidth}
    \includegraphics[width=\linewidth]{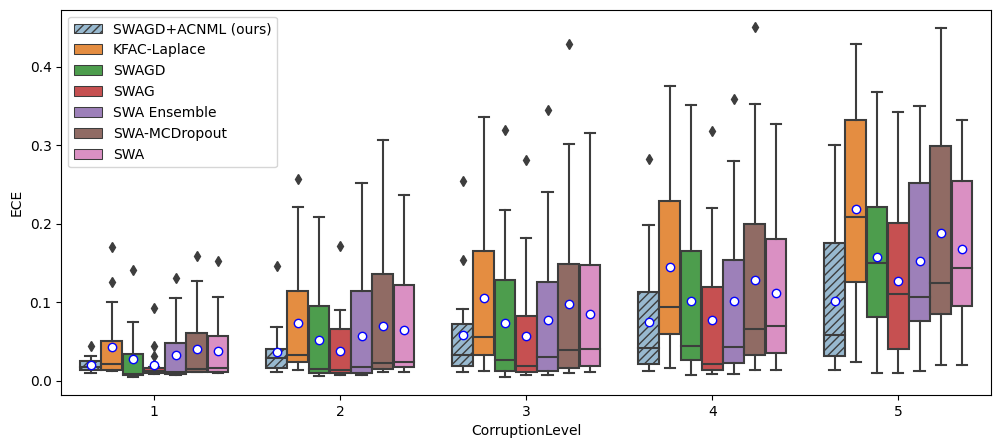}
    \vspace*{-6mm}
    \caption{CIFAR10C WRN28x10 ECEs (lower is better)}
    \label{fig:cifarc-vgg-ece}
    \end{subfigure}%
    \\
    \begin{subfigure}[t]{.7\linewidth}
    \includegraphics[width=\linewidth]{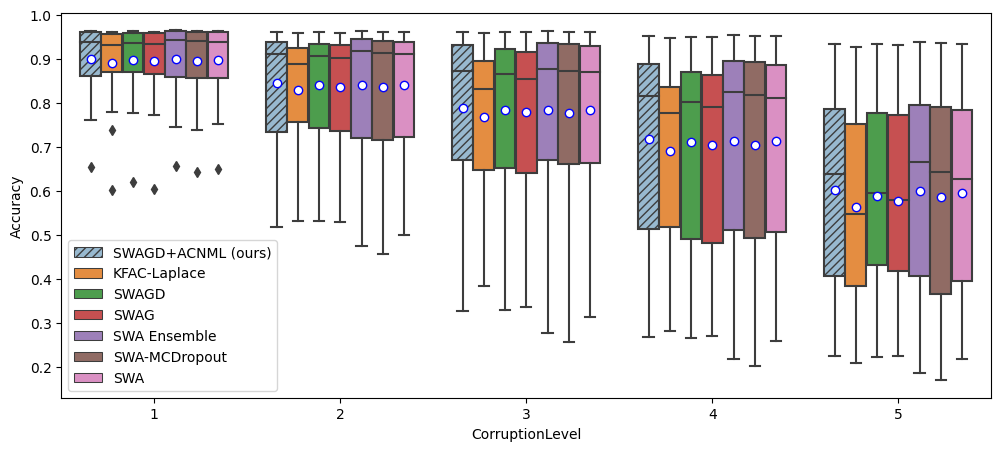}
    \vspace*{-6mm}
    \caption{CIFAR10C WRN28x10 Accuracies (higher is better)}
    \label{fig:cifarc-vgg-nll}
    \end{subfigure}%
    \\
\begin{subfigure}[t]{.7\linewidth}
    \includegraphics[width=\linewidth]{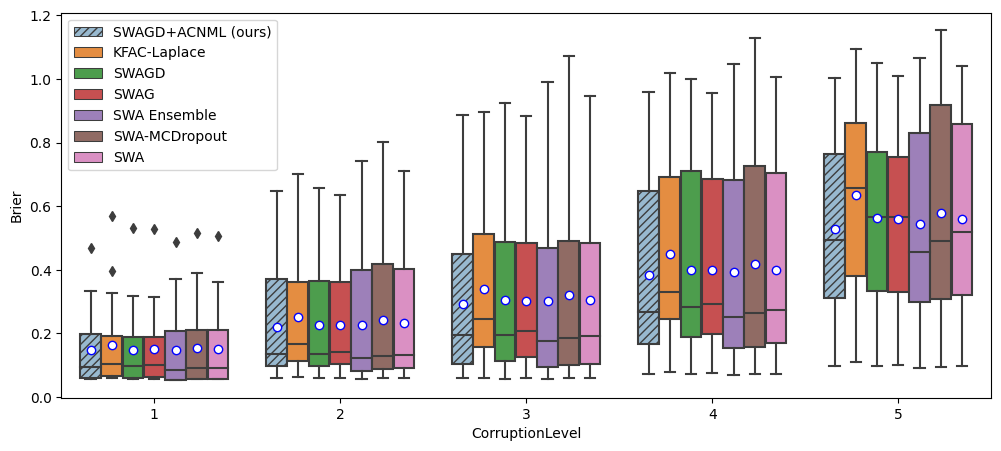}
    \vspace*{-6mm}
    \caption{CIFAR10C WRN28x10 Brier scores (lower is better)}
    \label{fig:cifarc-vgg-nll}
    \end{subfigure}%
    \\
\begin{subfigure}[t]{.7\linewidth}
    \includegraphics[width=\linewidth]{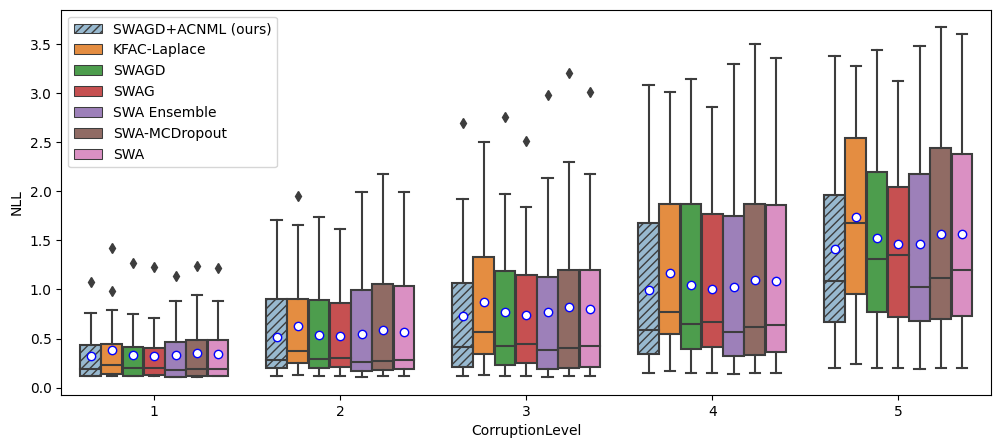}
    \vspace*{-6mm}
        \caption{CIFAR10C WRN28x10 NLLs (lower is better)}
    \label{fig:cifarc-vgg-nll}
    \end{subfigure}%
    \\
    \vspace*{-2mm}
\caption{
CIFAR10-C performance with the WideResNet28x10 architecture. Instantations of our methods are shown in stripes.
Boxplots show quartiles of each statistic over all different corruption types of the given intensity, with the mean indicated by a circle. Again, we see that ACNML attains better ECE values than comparable methods on the heavier corruptions (b).}
\label{fig:cifarc-results-wrn}
\end{figure*}

\newpage 
\section{Comparisons between ACNML and naive CNML on MNIST} \label{appendix:cnml-vs-acnml}
In this section, we include expanded comparisons between ACNML and a naive implementation of CNML from \citet{Bibas2019DeepNetworks} that computes the MLE/MAP $\hat \theta_y$ for each label by appending the query point and label to the dataset and finetuning for $N$ epochs. Both ACNML and naive CNML are initialized from the same MAP solution, with ACNML taking 5 gradient steps on the query point and posterior and naive CNML finetuning with the query point and training set for 5 epochs. For the OOD dataset, instead of computing results for every level of corruption, we instead simply average out over all corruption levels by randomly rotating the test inputs.

This naive implementation of CNML differs slightly from \citet{Bibas2019DeepNetworks} in that we finetune the entire network, while \citet{Bibas2019DeepNetworks} proposed only tuning the last few layers. During the finetuning, we also append the query point and label to every batch in optimization, and downweighting that portion of the loss accordingly to get unbiased gradient estimates. We found this led to more efficient optimization than randomly sampling 

We first examine how closely ACNML and naive CNML's predictions match on the same datapoint. 
To assess this, we compare the CNML normalization terms $\sum_y p_{\hat \theta_y}(y\vert x)$, NLLs, and the confidences of the two methods. 
The CNML normalization term captures how much each procedure was able to adapt to different labels for that input. A higher normalization term for an input means that we were flexible enough to fit multiple different labels well together with the training set (or approximate posterior in the case of ACNML), and typically means a less confident prediction on that input.

In Figures \ref{fig:indist-cnml-vs-acnml-scatters} and \ref{fig:ood-cnml-vs-acnml-scatters}, We show scatter plots over 1000 randomly selected test points (from the in-distribution test set and the randomly rotated OOD images respectively) comparing the CNML normalizers, NLLs, and confidences of ACNML and naive CNML. In each scatter plot, we include a diagonal red line to illustrate where points would lie if predictions of ACNML and naive CNML matched exactly.

We additionally plot reliability diagrams for MNIST experiments in Figure \ref{fig:mnist_reliability_diagrams}, showing that ACNML provides very conservative predictions.

For the in-distribution test set, we see from the CNML normalizer plot that the ACNML adaptation procedure using the approximate posterior is much less constraining than using the training set, resulting in the normalizers being higher for ACNML than naive CNML for almost all inputs. This leads to excess conservatism, with ACNML almost always having lower confidence its predictions. As a result, we see that on many points where naive CNML outputted confident correct answers and achieved close to 0 NLL loss, ACNML still incurs some higher losses due to its less confident predictions.

On the OOD rotated images, we again see that ACNML typically adapts more than CNML as measured by the CNML normalizers, though the difference is much less extreme compared to the in-distribution dataset. In the confidence scatter plot, we again see that ACNML tends to make lower confidence predictions than naive CNML (especially when naive CNML's predictions are confident), and as seen in Figure \ref{fig:mnist-lineplots-expanded}, result in ACNML having better Brier scores, NLL and calibration on the OOD inputs.

\begin{figure}[t]
    \centering
    \begin{subfigure}{.33\linewidth}
        \includegraphics[width=\linewidth]{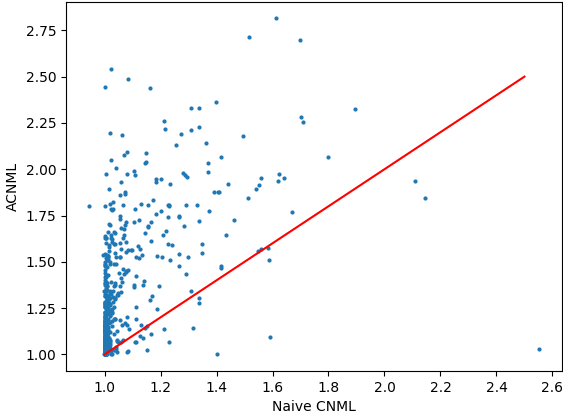}
        \caption{CNML Normalizers $\sum_y p_{\hat \theta_y}(y\vert x)$}
    \end{subfigure} %
    \begin{subfigure}{.33\linewidth}
        \includegraphics[width=\linewidth]{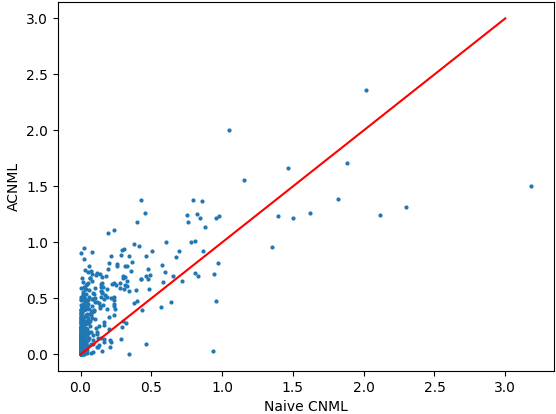}
        \caption{NLLs}
    \end{subfigure}%
    \begin{subfigure}{.33\linewidth}
        \includegraphics[width=\linewidth]{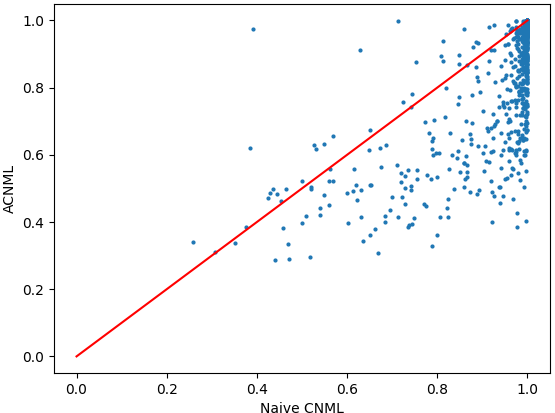}
        \caption{Confidences}
    \end{subfigure}
    \caption{\textbf{In Distribution Comparisons between ACNML and naive CNML.} We plot scatter plots of the values of each statistic for naive CNML (x-axis) vs ACNML (y-axis), with the red line indicating   Looking at the CNML normalizers, we see that the ACNML adaptation procedure using the approximate posterior is much less constraining than using the training set, resulting in the normalizers being higher for ACNML than naive CNML for almost all inputs. This leads to excess conservatism, with ACNML almost always having lower confidence its predictions, and many inputs with close to 0 NLL with naive CNML having higher NLL with ACNML.}
    \label{fig:indist-cnml-vs-acnml-scatters}
\end{figure}
\begin{figure}[t]
    \centering
    \begin{subfigure}{.33\linewidth}
        \includegraphics[width=\linewidth]{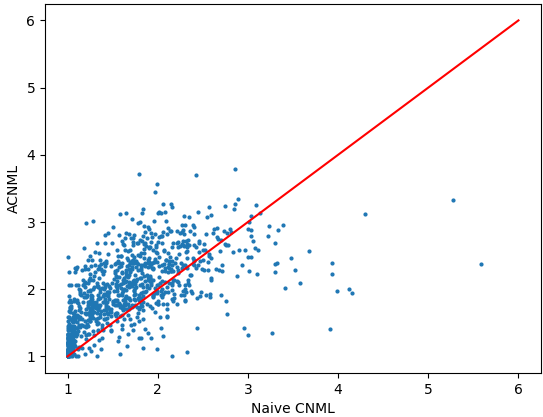}
        \caption{CNML Normalizers $\sum_y p_{\hat \theta_y}(y\vert x)$}
    \end{subfigure}%
    \begin{subfigure}{.33\linewidth}
        \includegraphics[width=\linewidth]{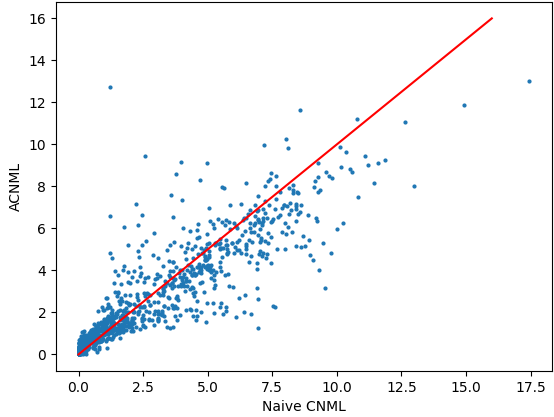}
        \caption{NLLs}
    \end{subfigure}%
    \begin{subfigure}{.33\linewidth}
        \includegraphics[width=\linewidth]{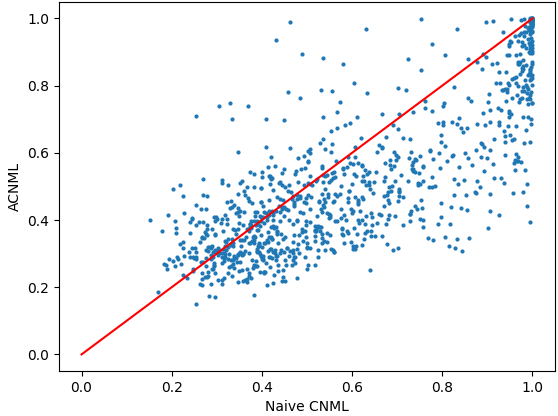}
        \caption{Confidences}
    \end{subfigure}
    \caption{\textbf{OOD Comparisons between ACNML and naive CNML.} We plot scatter plots of the values of each statistic for naive CNML (x-axis) vs ACNML (y-axis). Looking at the CNML normalizers, we again see that the ACNML adaptation procedure using the approximate posterior is less constraining than using the training set, with the normalizers being higher for ACNML than naive CNML for most inputs (though to lesser extent than the in-distribution data). ACNML again outputs more conservative predictions with lower confidence on many inputs, which leads to better NLL and calibration on the OOD dataset, unlike with the in-distribution test set.}
    \label{fig:ood-cnml-vs-acnml-scatters}
\end{figure}

\begin{figure*}
    \centering
        \begin{subfigure}[t]{0.5\textwidth}
            \centering
            \includegraphics[width=\linewidth]{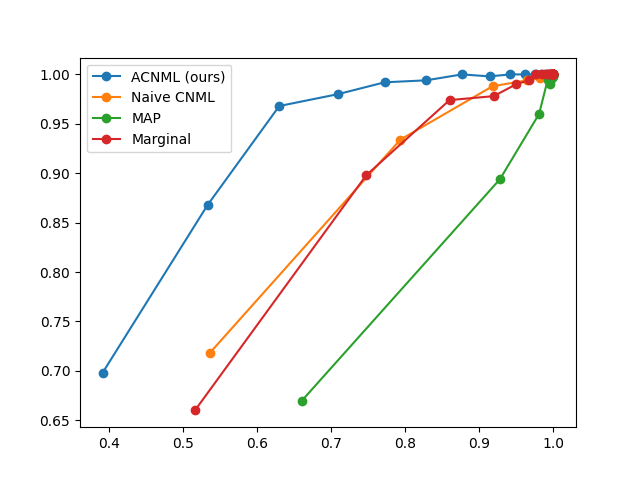}
            \caption{MNIST Test Set}
            \label{fig:reliability:mnist-test}
        \end{subfigure}%
        \begin{subfigure}[t]{.5\textwidth}
            \centering
            \includegraphics[width=\linewidth]{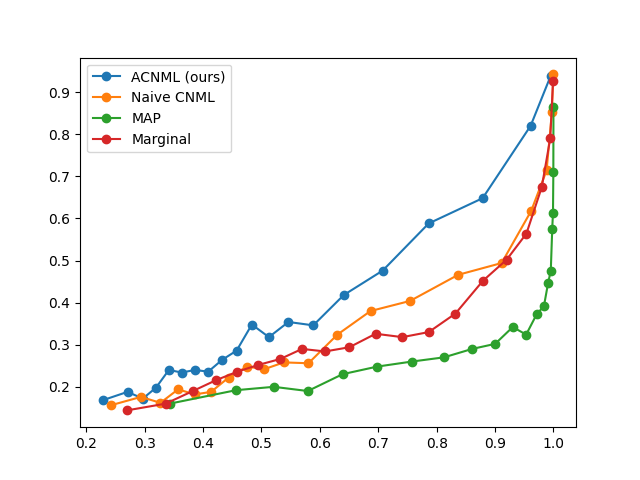}
            \caption{Randomly Rotated MNIST (OOD data) }
            \label{fig:reliability:mnist-ood}
        \end{subfigure}%
        \caption{Reliability diagrams plotting confidence vs. accuracy for Bayes-by-Backprop experiments on the MNIST test set and a randomly rotated MNIST test set (OOD). ACNML's conservative predictions provided better calibrated predictions on the OOD test set.}
    \label{fig:mnist_reliability_diagrams}
\end{figure*}

\textbf{Handling multiple MLEs in CNML}: Strictly speaking, the CNML distribution is not well defined when there exist multiple potential MLEs $\hat \theta_y$ that can output different predictions (prior references to CNML typically assume such MLEs are unique). However, the non-convexity of the objective for deep neural networks means multiple MLEs can exist, and to properly define CNML in this case, we would need to select a particular MLE to use when assigning probabilities in CNML.
In line with the min-max formulation of CNML, we propose to select the MLE $\hat \theta_y$ that maximizes the likelihood $p_{\hat \theta_y}(y\vert x)$ of the query point and proposed label, as this is the choice that maximizes the regret for that particular label over all MLEs.

With our naive CNML instantiation, we observe that during the finetuning for each query point $x$ and label $y$, the predicted probability of that label $p_{\theta}(y\vert x)$ does not monotonically increase over iterations as we might hope (since we initialize $\theta$ to be the MLE of the training set, then update it to maximize likelihood of the training set with the query point and label), but can potentially oscillate substantially throughout the finetuning process. 
We suspect this is due to the stochasticity in the optimization procedure from sampling minibatches of the training data, which causes the trajectory of parameters can potentially visit several different (approximate) local optima that output different predictions on the query point.
While our instantiation of naive CNML simply used the parameter found at the end of 5 epochs, we additionally compare against a variant that explicitly tries to select the MLE that maximizes the likelihood of the proposed label. 
This variant heuristically uses the bset value of $p_{\theta}(y\vert x)$ over all $\theta$ encountered in the last epoch of finetuning. We see in Figure \ref{fig:mnist-lineplots-expanded} that this variant, denoted naive CNML (max), gives more conservative predictions than naive CNML and improves in NLL and calibration on the more OOD rotated datasets. However, it is still not as conservative as ACNML using the Bayes-by-Backprop posterior, and so does not perform as well on the more severe rotations.

\begin{figure*}
 \centering
        \begin{subfigure}[t]{.5\textwidth}
            \centering
            \includegraphics[width=\linewidth]{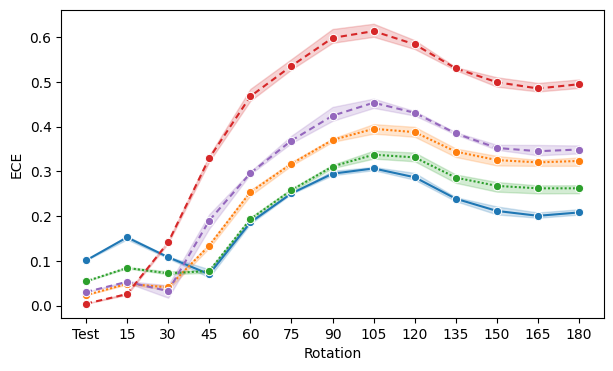}
            \vspace{-6mm}
            \caption{ECE lineplots}
            \label{fig:appendix-mnist-eces}
        \end{subfigure}%
        \begin{subfigure}[t]{.5\textwidth}
            \centering
            \includegraphics[width=\linewidth]{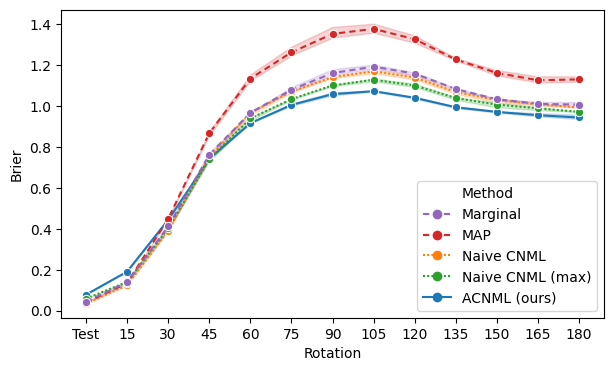}
            \vspace{-6mm}
            \caption{Brier score lineplots}
            \label{fig:appendix-mnist-briers}
        \end{subfigure}%
        \\
        \begin{subfigure}[t]{0.5\textwidth}
            \centering
            \includegraphics[width=\linewidth]{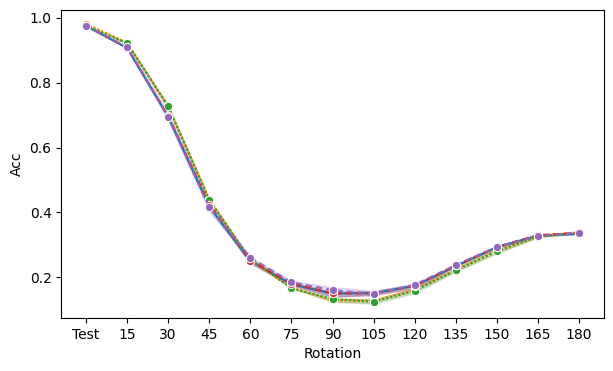}
            \vspace{-6mm}
            \caption{Accuracy lineplots}
            \label{fig:appendix:mnist-accs}
        \end{subfigure}
        \begin{subfigure}[t]{0.5\textwidth}
            \centering
            \includegraphics[width=\linewidth]{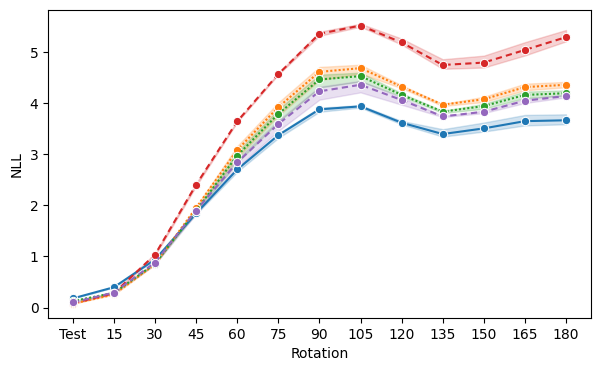}
            \vspace{-6mm}
            \caption{NLL lineplots}
            \label{fig:appendix:mnist-nlls}
        \end{subfigure}%
    
    \vspace{-2mm}
    \caption{\textbf{Expanded MNIST Results}: We include the accuracy and negative-log-likelihood metrics as well as ECE and Brier score. We see that all methods perform similarly in accuracy, and that, and ACNML also has better calibration (ECE), Brier scores, and NLLs on the more OOD datasets compared to other methods. We also additionally compare to the Naive CNML (max) method we designed to handle non-unique maximizers with naive CNML. We see that while the Naive CNML (max) variant outperforms Naive CNML on the more OOD datasets, ACNML is still more conservative, resulting in better calibrated estimates on the more severe rotations. }
    \label{fig:mnist-lineplots-expanded}
\end{figure*}

\section{NMAP and ACNML} \label{appendix:nmap}
NML type methods can be extended with a prior-like regularization term on the selected parameter, resulting in Normalized Maximum a Posteriori (NMAP)\citep{Kakade2006Worst-caseModels}, also referred to as Luckiness NML \citep{Grunwald2004APrinciple}.
For a regularizer given by $\log p(\theta)$, NMAP assigns probabilities according to
\begin{align*}
    p^{\text{NMAP}}(x^n) &\propto p_{\hat \theta(x^n)}(x^n) \\ \hat \theta(x^n) &= \argmax_{\theta} \log p_\theta(x^n) + \log p(\theta).
\end{align*}

Similarly to CNML, there are several variations on NMAP that predict slightly different distributions, but we adopt the one of the same form as our CNML. 
Similarly to how NML was extended to CNML, NMAP can be extended to a conditional version, again with the $\hat \theta$'s being chosen via MAP rather than MLE.
As mentioned in Section 3.1, with a non-uniform prior, ACNML actually approximates a version of conditional NMAP, with the Bayesian prior term on the parameters corresponding to the additional regularizer.

We also note that with the calculations in section 3.1, CNML can be viewed as performing NMAP on a single new test point, with a regularizer corresponding to the posterior likelihood from the training set.
In this perspective, ACNML approximates CNML by using an approximation to that training set regularizer.

\section{Details of Analysis in Section 3.2} \label{appendix:ij-expanded-details}
\subsection{Bounding Error in Parameter Estimation}
Here we state the primary theorem of \citet{giordano2019swiss} along with the necessary definitions and assumptions.

Here, we attempt to estimate an unknown parameter $\theta \in \Omega_\theta \subseteq \R^D$ where $\Omega_{\theta}$ is compact.
Suppose we have a dataset $N$ datapoints and a weight vector $w_1, \ldots, w_N$. 
Let $g_i(\theta)$ denote the gradient of the loss at datapoint $i$ evaluated at $\theta$, and $h_i(\theta)$ the Hessian.
We can then define
\begin{align}
    G(\theta, w) &= \frac{1}{N} \sum_{i=1}^N w_i g_i(\theta) \\
    H(\theta, w) &= \frac{1}{N} \sum_{i=1}^N w_i h_i(\theta).
\end{align}
The MLE $\hat \theta(w)$ for the dataset weighted by $w$ is given by solving for $G(\hat \theta(w), w) = 0$.
Let $1_w$ denote the vector of weights consisting of all 1s. 
We define $\hat \theta_1$ to be the MLE for the whole unweighted dataset, which is equivalent to evaluating $\hat \theta(1_w)$ and also define the corresponding Hessian $H_{1} = H(\hat \theta_1, 1_w)$.
We now wish to estimate $\hat \theta(w)$ using a first order approximation around $\hat \theta_1$ given by 
\newcommand{\ijestimate}{\hat \theta_{\text{IJ}}}
\begin{align}
    \ijestimate(w) = \hat \theta_1 - H_1^{-1} G(\hat \theta_1, \Delta w),
\end{align}
where we define $\Delta_w = w - 1_w$. 
The theorem will proceed to bound $\norm{\hat \theta(w) - \ijestimate}_2$ for suitable weights $w$.

Now we further define $g(\theta) \in \R^{N\times D}$ to be the concatenation of all $g_i(\theta)$s and similarly for $h(\theta) \in \R^{N\times D \times D}.$
We let $\norm{g(\theta)}_p$ and $\norm{h(\theta)}_p$ to refer to the $p$-norms when treating those as vector quantities.

\textbf{Assumption 1} (Smoothness): For all $\theta\in \Omega_\theta$ each $g_n(\theta)$ is continuously differentiable.

\textbf{Assumption 2} (Non-degeneracy): For all $\theta \in \Omega_\theta$, $H(\theta, 1_w)$ is nonsingular and 
\begin{align}
    \sup_{\theta \in \Omega_\theta}\norm{H(\theta, 1_w)^{-1}}_{op} \leq C_{op} \leq \infty.
\end{align}

\textbf{Assumption 3} (Bounded averages): There exist finite constants $C_g$ and $C_h$ such that $\sup_{\theta \in \Omega_{\theta}} \frac{1}{\sqrt{N}}\norm{g(\theta)}_2 \leq C_g$ and $\sup_{\theta \in \Omega_{\theta}} \frac{1}{\sqrt{N}}\norm{h(\theta)}_2 \leq C_h.$

\textbf{Assumption 4} (Local Smoothness): There exists a $\Delta_\theta > 0$ and a finite constant $L_h$ such that $\norm{\theta - \hat \theta_1}_2 \leq \Delta_{\theta}$ implies $\frac{\norm{h(\theta) - h(\hat \theta_1)}_2}{\sqrt N} \leq L_h \norm{\theta - \hat \theta_1}_2.$

\textbf{Assumption 5} (Bounded weight averages). $\frac{1}{\sqrt{N}}\norm{w}_2$ is uniformly bounded for all $w \in W$ by a finite constant $C_w$.

We note that assumption 2 is equivalent to $H_1$ being strongly positive definite. Assumption 5 is not relevant for our use cases, but is stated for completeness.

\textbf{Condition 1} (Set Complexity): There exists a $\delta \geq 0$ and corresponding set $W_\delta \subseteq W$ such that
\begin{align}
    \max_{w\in W_\delta} \sup_{\theta \in \Omega_\theta} \norm{\frac{1}{N}\sum_{i=1}^N (w_i - 1)g_i(\theta)}_1 \leq \delta. \\
    \max_{w\in W_\delta} \sup_{\theta \in \Omega_{\theta}} \norm{\frac{1}{N} \sum_{i=1}^N (w_i - 1) h_i(\theta)}_1 \leq \delta.
\end{align}
Condition 1 essentially describes the set of weight vectors for which $\ijestimate$ will be an accurate approximation within order $\delta$.

\textbf{Definition 1}: Given assumptions 1-5, define 
\begin{align}
    C_{\text{IJ}} &= 1 + DC_wL_h C_{op} \\
    \Delta_{\delta} &= \min\{\Delta_{\theta} C_{op}^{-1}, \frac{1}{n}C_{\text{IJ}}^{-1}C_{op}^{-1}\}.
\end{align}

We now state the main theorem of \citet{giordano2019swiss}.

\textbf{Theorem} (Error Bound for the approximation).
Under assumptions 1-5 and condition 1, 
\begin{align}
    \delta \leq \Delta_\delta \Rightarrow \max_{w\in W_\delta} \norm{\ijestimate(w) - \hat \theta(w)}_2 \leq 2 C_{op}^2 C_{\text{IJ}} \delta^2.
\end{align}

We can now apply the above theorem to provide error bounds for a setting where we have a training set of $n$ datapoints and wish to consider the MLE after adding a new datapoint $z$. 
The issue is that the theorem as stated bounds the error of the approximation when the approximation is centered around the uniform weighting over all the datapoints, which would be appropriate for considering the impact of \textit{removing} datapoints from the dataset.

To apply the theorem to bound the effects of \textit{adding} a datapoint, we have to do some slight manipulation.
We apply the previous theorem with $N= n+2$, where $g_i(\theta)$ correspond to the gradients of training data point $i$ for $i$ in $(1,\ldots, n)$, $g_{n+1} = - \grad \log p_{\theta}(z)$, and $g_{n+2} = \grad \log p_{\theta}(z)$, and similarly for the Hessians $h_i(\theta)$.
We have thus added the query point to the dataset, as well as another fake point that serves to cancel out the contribution of the query point under a uniform weighting, so $G(\theta, 1_w)$ and $H(\theta, 1_w)$ are the mean gradients and Hessians for just the training set.
Now supposing assumptions 1-5 are met for this problem, then we need to check condition $1$ for the particular $W_\delta$ that contains the vector $\bar w$ of all 1s, except for a 2 in the last entry.
We can then find the smallest $\delta$ that satisfies 
\begin{align}
    \sup_{\theta \in \Omega_{\theta}} \norm{\frac{1}{N+2} g_{n+2}(\theta)}_1 \leq \delta \\
    \sup_{\theta \in \Omega_{\theta}} \norm{\frac{1}{N+2} h_{n+2}(\theta)}_1 \leq \delta,
\end{align}
and so long as $\delta \leq \Delta_\delta$, applying the theorem bounds $\norm{\ijestimate(\bar w) - \hat \theta(\bar w)}_2$.

\textbf{Commentary}: The above theorem gives explicit conditions for the accuracy of the approximation that we can verify for a particular training set and query point. 
Under assumptions that we have some limiting procedure for growing the training set such that the constants defined hold uniformly, we can extend this to an asymptotic statement to explicitly say that the approximation error decays as $O(n^{-2})$.

\subsection{Bounding error in the resulting CNML distribution}
We now provide the proof for Proposition \ref{cnml-logit-accuracy-bound}, which we restate here. For notational simplicity, we ignore any dependence on the input $x$, which we consider fixed.
\begin{prop}[3.2]
Suppose $z \in \mathcal Z$ with $\abs{\mathcal Z} = k$ (for example classification with $k$ classes).
Let $\hat \theta_z$ be the exact MLE after appending $z$ to the training set, and let $\tilde \theta_z$ be an approximate MLE with $\norm{\hat \theta_z - \tilde \theta_z} \leq \delta$  for all $z$. 
Further suppose $\log p_{\theta}(z)$ is $L$-Lipschitz in $\theta$.

Denote the exact CNML distribution $p_{\text{CNML}}(z) \propto p_{\hat \theta_z}(z)$ and an approximate CNML distribution $p_{\text{ACNML}}(z) \propto p_{\tilde \theta_z}(z)$. Then, we have the bound
\begin{align}
    \sup_{z} \abs{\log p_{\text{CNML}}(z) - \log p_{\text{ACNML}}(z)} \leq 2L\delta.
\end{align}
\end{prop}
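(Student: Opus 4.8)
The plan is to make the two normalizing constants explicit and split the log-ratio into a numerator piece and a log-normalizer piece, each of which is controlled by the same Lipschitz estimate. Writing $Z \defeq \sum_{z' \in \mathcal Z} p_{\hat\theta_{z'}}(z')$ and $\tilde Z \defeq \sum_{z' \in \mathcal Z} p_{\tilde\theta_{z'}}(z')$, we have for every $z$
\[
  \log p_{\text{CNML}}(z) - \log p_{\text{ACNML}}(z) = \big(\log p_{\hat\theta_z}(z) - \log p_{\tilde\theta_z}(z)\big) - \big(\log Z - \log \tilde Z\big),
\]
so by the triangle inequality it suffices to bound each bracketed term separately.

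For the numerator term, $L$-Lipschitzness of $\log p_\theta(\cdot)$ in $\theta$ together with $\|\hat\theta_z - \tilde\theta_z\| \le \delta$ gives $|\log p_{\hat\theta_z}(z) - \log p_{\tilde\theta_z}(z)| \le L\delta$ immediately. For the log-normalizer term, the point is that the same estimate applies inside the sum, one summand at a time: for each $z'$ we have $|\log p_{\hat\theta_{z'}}(z') - \log p_{\tilde\theta_{z'}}(z')| \le L\delta$, hence $e^{-L\delta}\, p_{\hat\theta_{z'}}(z') \le p_{\tilde\theta_{z'}}(z') \le e^{L\delta}\, p_{\hat\theta_{z'}}(z')$; summing over $z' \in \mathcal Z$ preserves these inequalities, so $e^{-L\delta} Z \le \tilde Z \le e^{L\delta} Z$ and therefore $|\log Z - \log \tilde Z| \le L\delta$. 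Adding the two bounds gives $\sup_z |\log p_{\text{CNML}}(z) - \log p_{\text{ACNML}}(z)| \le 2L\delta$, which is at most the claimed $(k+1)L\delta$ for any $k \ge 1$, establishing the proposition. (The stated constant $(k+1)$ is looser than what this argument produces; it can be recovered by a cruder term-by-term \emph{additive} count on the normalizing constant — using that $t \mapsto e^t$ is nonexpansive on $(-\infty,0]$ since each $p_\theta(z') \le 1$, so $|\tilde Z - Z| \le k L \delta$ — passed to logarithms via the standard lower bound $Z \ge 1$ for NML normalizers.)

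I do not anticipate a genuine obstacle: the argument is elementary and uses only Lipschitzness of the log-likelihood and the parameter-estimation bound $\|\hat\theta_z - \tilde\theta_z\| \le \delta$ (which, for large $n$, is itself furnished by Theorem~\ref{adapted-theorem}). The one subtlety worth flagging is that the Lipschitz bound should be applied to the individual likelihoods $p_{\hat\theta_{z'}}(z')$ before summing — equivalently, one works multiplicatively in probability space / additively in log space — rather than first bounding $|\tilde Z - Z|$ and then taking logarithms; the latter route forces one to invoke a lower bound on the normalizer to control $\log$, which is precisely where the $Z \ge 1$ fact and the extra factor of $k$ would have to enter.
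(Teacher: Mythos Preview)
Your proof is correct and in fact sharper than the paper's. Both arguments use the same decomposition into a numerator term and a log-normalizer term, and both bound the numerator by $L\delta$ via Lipschitzness. The difference is in the normalizer: the paper works \emph{additively}, first converting $|\log p_{\hat\theta_{z'}}(z') - \log p_{\tilde\theta_{z'}}(z')| \le L\delta$ into $|p_{\hat\theta_{z'}}(z') - p_{\tilde\theta_{z'}}(z')| \le L\delta$ (using that $\log$ has derivative $\ge 1$ on $(0,1]$), summing to get $|Z - \tilde Z| \le kL\delta$, and then invoking $1$-Lipschitzness of $\log$ on $[1,\infty)$ to pass back to $|\log Z - \log \tilde Z| \le kL\delta$. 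You instead work \emph{multiplicatively}, keeping the per-term bound in log space, summing the sandwich $e^{-L\delta}p_{\hat\theta_{z'}}(z') \le p_{\tilde\theta_{z'}}(z') \le e^{L\delta}p_{\hat\theta_{z'}}(z')$ directly, and obtaining $|\log Z - \log \tilde Z| \le L\delta$ with no dependence on $k$. This yields $2L\delta$ rather than $(k+1)L\delta$, and — as you correctly flag — it also sidesteps the need for the lower bound $Z,\tilde Z \ge 1$ that the paper's additive route relies on. (That lower bound does hold for the exact CNML normalizer $Z$, since $p_{\hat\theta_z}(z) \ge p_{\theta^*}(z)$ for the training-set MLE $\theta^*$ and these sum to $1$; but for the approximate normalizer $\tilde Z$ it is not automatic, so your route is cleaner there as well.) Your parenthetical recovering the paper's constant via the additive count is exactly what the paper does.
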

\begin{proof}

The assumed bound $\norm{\hat \theta_z - \tilde \theta_z}_2 \leq \delta$ combined with $L$-Lipschitzness implies a bound on differences of logits of each class
\begin{align} \label{eq:logit_bound_acnml}
    \abs{\log p_{\hat \theta_z}(z) - \log p_{\hat \theta_z}(z)} \leq L\delta.
\end{align}

We note that the log probabilities of the exact CNML distribution $p_{\text{CNML}}$ ($p_{\text{ACNML}}$ is given by a similar expression using $\tilde \theta_z$ instead of $\hat \theta_z$) is given by
\begin{align}
    \log p_{\text{CNML}}(z) &=  \log p_{\hat \theta_z}(z) - \log \sum_{z' \in \mathcal Z} p_{\hat \theta_{z'}}(z').
\end{align}
For any $z \in \mathcal Z$, we can then expand, apply the triangle inequality and then Equation $\ref{eq:logit_bound_acnml}$ to obtain
\begin{align}
    &\abs{\log p_{\text{CNML}}(z) - \log p_{\text{ACNML}}(z)} \nonumber \\ 
    &= \lvert \log p_{\hat \theta_z}(z) - \nonumber  \log p_{\tilde \theta_z}(z) \nonumber \\ & - \log \sum_{z'\in \mathcal Z} p_{\hat \theta_{z'}}(z') + \log \sum_{z' \in \mathcal Z} p_{\tilde \theta_{z'}}(z') \rvert \\
    &\leq \abs{\log p_{\hat \theta_z}(z) - \log p_{\tilde \theta_z}(z)} \nonumber \\ &+ \abs{\log \sum_{z' \in \mathcal Z} p_{\hat \theta_{z'}}(z') - \log \sum_{z' \in \mathcal Z} p_{\tilde \theta_{z'}}(z')} \\
    &\leq L\delta + \abs{\log \sum_{z' \in \mathcal Z} p_{\hat \theta_{z'}}(z') - \log \sum_{z'\in \mathcal Z} p_{\tilde \theta_{z'}}(z')}.
\end{align}

We now bound the difference between the log-normalizers $\abs{\log \sum_{z'} p_{\hat \theta_{z'}}(z') - \log \sum_{z'} p_{\tilde \theta_{z'}}(z')}$.

We first let $p_{\min}(z) = \min\{p_{\hat \theta_z}(z), p_{\tilde \theta_z}(z)\}$ and $p_{\max}(z) = \max\{p_{\hat \theta_z}(z), p_{\tilde \theta_z}(z)\}$, and note that Equation \ref{eq:logit_bound_acnml} implies $\log p_{\max}(z) \leq \log p_{\min}(z) + L\delta$ for all $z$.  We then bound the difference in log-normalizers
\begin{align}
    & \abs{\log \sum_{z'\in \mathcal Z} p_{\hat \theta_{z'}}(z') - \log \sum_{z'\in \mathcal Z} p_{\tilde \theta_{z'}}(z')} \nonumber\\ &\leq \log \sum_{z' \in \mathcal Z} p_{\max} (z') - \log \sum_{z'\in \mathcal Z} p_{\min}(z')  \\ 
    &= \log \frac{\sum_{z'\in \mathcal Z} p_{\max}(z')}{\sum_{z'\in \mathcal Z} p_{\min}(z')} \\
    &= \log \frac{\sum_{z'\in \mathcal Z} \exp(\log p_{\max}(z'))}{\sum_{z'\in \mathcal Z} p_{\min}(z')} \\
    &\leq \log \frac{\sum_{z' \in \mathcal Z} \exp(\log p_{\min}(z') + L\delta)}{\sum_{z'\in \mathcal Z} p_{\min}(z')} \\
    &= \log \frac{ \exp(L\delta) \sum_{z' \in \mathcal Z} p_{\min}(z')}{\sum_{z' \in \mathcal Z} p_{\min}(z')} \\
    &= L\delta.
\end{align}
Plugging back into Equation 37, we have the following bound for all $z\in \mathcal Z$   
\begin{align}
\abs{\log p_{\text{CNML}}(z) - \log p_{\text{ACNML}}(z)} &\leq 2L\delta.
\end{align}
\end{proof}

\end{document}